\documentclass[11pt,english]{thesis}
\usepackage{afterpage}
\usepackage{amsthm}

\DeclareMathOperator*{\argmax}{arg\,max}

\newcommand{\Api}{A_{\pi}}
\newcommand{\Apithold}{A_{\thold}}

\newcommand{\by}[1]{{\frac{1}{#1}}}
\newcommand{\cA}{\mathcal{A}}

\newcommand{\cP}{\mathcal{P}}
\newcommand{\cR}{\mathcal{R}}
\newcommand{\cS}{\mathcal{S}}

\newcommand{\diag}{\operatorname{diag}}

\newcommand{\Ea}[1]{\E\left[#1\right]}
\newcommand{\Eb}[2]{\E_{#1}\left[#2\right]}

\newcommand{\E}{\mathbb{E}}
\newcommand{\Epi}[1]{\E^\pi\left[#1\right]}
\newcommand{\Etilpi}[1]{\E^{\tilpi} \left[#1\right]}

\newcommand{\given}{|}

\newcommand{\indicator}{\mathbbm{1}}

\newcommand{\isd}{q}

\newcommand{\kl}[2]{D_{\rm KL}(#1 \ \| \ #2)}

\newcommand{\lrbrack}[1]{\left[#1\right]}

\newcommand{\maximize}{\operatorname*{maximize}}
\newcommand{\maxkl}{{\ensuremath D^{\rm max}_{\rm KL}}}

\newcommand{\meankl}[1]{{\ensuremath \overline D_{\rm KL}^{#1}}}

\newcommand{\normtwo}[1]{\left\lVert#1\right\rVert}

\newcommand{\pith}{\pi_{\theta}}

\newcommand{\Qpi}{Q_{\pi}}
\newcommand{\Qpis}{Q_{\pi^*}}
\newcommand{\Qpithold}{Q_{\thold}}

\newcommand{\Real}{\mathbb{R}}
\newcommand{\rhopi}{\rho_{\pi}}

\newcommand{\thold}{\theta_{\mathrm{old}}}

\newcommand{\tilpi}{\tilde{\pi}}

\newcommand{\Tpi}{\mathcal{T}_{\pi}}

\newcommand{\Vpi}{V_{\pi}}
\newcommand{\ViewLink}[1]{\href{#1}{\color{blue}{[View]}}}

%%%%%%%%%%%%%%%%%%%%% Graphs commands 
\graphicspath{{figures/}}
\newcommand{\labelnotempty}[1]{
	\def\temp{#1}
	\ifx\temp\empty
	\else
		\label{#1}
	\fi
}
\newcommand{\singlefig}[4]{
	\begin{figure}[H]
		\centering
		\includegraphics[width=#1\columnwidth]{#2}
		\caption{#3}
		\labelnotempty{#4}
	\end{figure}
}
\newcommand{\subfig}[4]{ \includegraphics[width={#2}\columnwidth]{#1}\caption{#3}
	\labelnotempty{#4}
}

\renewcommand{\Return}{{\hspace{-12pt}\textbf{Return:}}}
% double figure
\newcommand{\doublefig}[4]{

	\begin{figure}[h!]
		\centering
		\begin{subfigure}[t]{0.45\columnwidth}\centering #1 \end{subfigure}
		~
		\begin{subfigure}[t]{0.45\columnwidth}\centering #2 \end{subfigure}
		\caption{#3}
		\labelnotempty{#4}\end{figure}
}

% triple figure
\newcommand{\triplefig}[5]{

	\begin{figure}[ht!]
		\centering
		\begin{subfigure}[h!]{0.30\columnwidth}\centering #1 \end{subfigure}
		~
		\begin{subfigure}[h!]{0.30\columnwidth}\centering #2 \end{subfigure}
		~
		\begin{subfigure}[h!]{0.30\columnwidth}\centering #3 \end{subfigure}
		\caption{#4}
		\labelnotempty{#5}
	\end{figure}
}

\newcommand{\blankpage}{%
	\null
	\thispagestyle{empty}%
	\addtocounter{page}{-1}%
	\newpage
}

\newcommand{\acro}[2]{\newacronym{#1}{#1}{#2}}
\usepackage{palatino}
\acro{A3C}{Asynchronous Actor-Critic Agents}
\acro{CPI}{Conservative Policy Iteration}
\acro{DPG}{Deterministic Policy Gradient}
\acro{DP}{Dynamic Programming}
\acro{HREPS}{Hierarchical Relative Entropy Search}
\acro{HRL}{Hierarchical Reinforcement Learning}
\acro{KL}{Kullback Leibler}
\acro{LSPI}{Least Square Policy Iteration}
\acro{MDP}{Markov Decision Process}
\acro{MI}{Mutual Information}
\acro{PVF}{Proto-Value Function}
\acro{PI}{Policy Iteration}
\acro{REPS}{Relative Entropy Search}
\acro{RIM}{Regularised Information Maximization }
\acro{RL}{Reinforcement Learning}
\acro{RIMs}{Regularization via Information Maximization}
\acro{SARSA}{State-Action-Reward-State-Action}
\acro{SL}{Supervised Learning}
\acro{SSL}{Semi-Supervised Learning}
\acro{SpIn}{Spectral Inference Network}
\acro{TRPO}{Trust Region Policy Optimization}
\acro{UL}{Unsupervised Learning}
\acro{VI}{Value Iteration}
\acro{h-DQN}{hierarchical Deep Q Learning}

\newtheorem{theorem}{Theorem}[chapter]
\newtheorem{lemma}{Lemma}[chapter]
\makeglossaries
% \makenoidxglossaries
% \newtheorem{theorem}{Theorem}

\begin{document}
\pagenumbering{roman}
\begin{center}
{\LARGE \scshape{ \Large Reinforcement Learning with Options and State Representation}
} % you can format this how you like, but it is usually in all-caps
		
\vspace{2cm}
\line(1,0){200} \\
%\vspace*{0.5cm}

%\vspace*{0.5cm}
\large \scshape{
Ayoub GHRISS\\}

\vspace*{0.3cm}
Supervisor: \\
Dr Masashi Sugiyama\\
RIKEN AIP\\
\vspace*{0.3cm}
Academic Supervisor:\\
Dr Alessandro Lazaric\\
ENS Paris-Saclay	

\vspace*{0.5cm}
		
\line(1,0){200}
		
\vspace{1cm}
\textsc{
Master Thesis
\\
Mathématiques, Vision, Apprentissage}

\vspace{2cm}
\textnormal{	Submitted to Department of Applied Mathematics, ENS Paris-Saclay
\\
Performed in RIKEN Advanced Intelligence Project 
}
		
% \vspace{1cm}
\line(1,0){200} \\	
\vspace*{1cm}	\textnormal{
Tokyo, Japan\\
Sept, 2018}
	\thispagestyle{empty}
	\end{center}

\newpage

\afterpage{\blankpage}\newpage
\addcontentsline{toc}{chapter}{Acknowledgments}

\chapter*{Acknowledgments}

A very special gratitude goes out to all researchers and administration staff in
RIKEN AIP for making the amazing time in RIKEN labs a memorable, enriching, and
scientifically dense experience.\par

I am grateful to Dr Masashi Sugiyama for the opportunity he gave me to join his
Imperfect Information Learning team to perform my internship.\par

With a special mention to Voot Tangkaratt for the concise and straight to point
responses to my inquiries, Charles Riou for his expertise and reviews on the best
Japanese raustaurants in Tokyo, Jill-Jenn Vie for the enthusiastic guiding and
knowledge about Japan, and Imad El-Hanafi for sharing his unlimited arsenal of
Moroccan humour.

\vspace{2em}
Thank you for all your encouragement!

\afterpage{\blankpage}
\addcontentsline{toc}{chapter}{Abstract}

\chapter*{Abstract}

The current thesis aims to explore the reinforcement learning field and build on
existing methods to produce improved ones to tackle the problem of learning in
high-dimensional and complex environments. It addresses such goals by decomposing
learning tasks in a hierarchical fashion known as Hierarchical Reinforcement
Learning.

We start in the first chapter by getting familiar with the Markov Decision Process
framework and presenting some of its recent techniques that the following chapters
use. We then proceed to build our Hierarchical Policy learning as an answer to the
limitations of a single primitive policy. The hierarchy is composed of a manager
agent at the top and employee agents at the lower level. \par

In the last chapter, which is the core of this thesis, we attempt to learn
lower-level elements of the hierarchy independently of the manager level in what is
known as the "Eigenoption". Based on the graph structure of the environment,
Eigenoptions allow us to build agents that are aware of the geometric and dynamic
properties of the environment. Their decision-making has a special property: it is
invariant to symmetric transformations of the environment, allowing as a
consequence to greatly reduce the complexity of the learning task.

\tableofcontents
\pagenumbering{arabic}
\newpage

\chapter{Reinforcement Learning Framework} \label{chap:chapter1}

\begin{abstract}

	We introduce the Markov Decision Process paradigm, some approximate Dynamic
	Programming methods, their counterparts in continuous space states, and lay
	their fundamentals, which will be used in the next chapters.

\end{abstract}

\section*{Introduction}

Reinforcement Learning (\citeauthor{rlIntroSutton} \citep{rlIntroSutton}) is a
mathematical framework for active learning from experience. The concept is based on
developing agents capable of learning in an environment through trial-and-error.
The environment consists of a set of states $S$ and actions $A$. The agent
interacts with the environment by moving from one state to another and receives
feedback about its decisions through rewards. The goal of the agent is to learn a
decision-making process, also called a policy, that maximizes the expected
accumulated reward.\par

Based on this definition, we can already distinguish between two different types of
Reinforcement Learning, depending on the prior knowledge about the environment:
model-based and model-free. In the first type of RL, the dynamics of the
environment are known prior to the learning process, and the agent incorporates the
knowledge of the transition process between states (the laws of physics in a
robotics task, for example). In contrast, in the second type, the agent has no
prior knowledge about the environment dynamics and needs to acquire it while
learning the best policy. In this work, we address model-free RL.

In this chapter, we introduce the Markov Decision Process paradigm (\textbf{MDP})
and summarize its fundamentals.

\section{Markov Decision Process}
In standard RL models, the agent is represented as a time-homogeneous \Gls{MDP}. An
\Gls{MDP} is a tuple $ \Omega =(\cS,\cA,\cR,\cP) $ where $\cS$, $\cA$ are
respectively the state and action spaces, $\cP$ is the transition kernel where
$P(s_{t+1}|s_t, a_t)$ represents the probability of transition from state $s_t$ to
a new state $s_{t+1}$ when taking the action $a_t$. Each decision of taking action
$a_t$ at state $s_t$ can yield a certain reward $R(s_t,a_t)$ which we assume is
bounded and deterministic given $s_t$ and $a_t$. In the case of an infinite
horizon, future rewards can be discounted with a factor $\gamma$ and the \Gls{MDP}
becomes $ \Omega =(\cS,\cA,\cR,\cP,\gamma)$, which will be our \Gls{MDP} of
interest in this work.

\singlefig{0.35}{1_MDP.png}{MDP structure}{fig:MDP1}

% \begin{figure}[H]
% \centering
% \includegraphics[width=#1\columnwidth]{#2}
% \caption{#3}
% \labelnotempty{#4}
% \end{figure}
% \begin{figure}
%   \begin{center}
%     \includegraphics[width=0.95\textwidth]{./}
%   \end{center}
%   \caption{}\label{fig:}
% \end{figure}

The goal in \gls{RL} is to permit the agent to learn autonomously, through trial
and error, a policy $\pi: \cS \times \cA \rightarrow [0,1]$ that allows to maximize
the expected accumulated rewards $\eta(\pi)$ defined as:

\begin{align}
	\eta(\pi) = \Epi{\sum_{t=0}^{\infty} \gamma^t r(s_t)}\text{,}
\end{align}

where $\E^\pi$ means:

\begin{itemize}
	\item $ s_0 \sim \rho_0$, $\rho_0$ is the initial distribution over the state space
	\item The action is then distributed following $a_t \sim \pi(.|s_t) $.
	\item The next states follow $ s_{t+1} \sim P(.|s_t, a_t)$
\end{itemize}

The time homogeneity implies that at any time $t$ we have:
\[
	P(s_{t+1}=s'|s_t=s, a_t=a) = P(s'|s, a)
\]

\subsection{Value Functions}

Value functions measure the expected return conditioned on starting state and/or
actions: the state-action value function $\Qpi : \cS\times\cA \rightarrow \Real$,
and state value function $\Vpi : \cS \rightarrow \Real$ both defined with respect
to policy $\pi$:

\begin{align}
	\Qpi(s, a) & = \Eb{s_{t},a_{t},\dots}{\sum_{t=0}^{\infty} \gamma^{t} r(s_{t})|s_0=s,a_0=a}\text{,} \\
	\Vpi(s)    & = \Eb{a_t,s_t,\dots}{\sum_{t=0}^{\infty} \gamma^t r(s_{t})|s_0 = s}\text{,}
\end{align}

and define the advantage function:
\[
	\Api(s, a) = \Qpi(s,a) - \Vpi(s)
\]

The value $\Qpi(s, a)$\label{par:rhoadv} is the expected accumulated reward when
taking the action $a$ at state $s$. When we sample this action $a$ following the
policy $\pi(.|s)$, then we expect to accumulate $\Vpi(s)$. The advantage $\Api(s,
a)$, on the other hand, translates the gain of taking the action $a$ in state $s$
instead of following the policy $\pi$. Consequently, the policy is optimal if the
advantage is non-positive at each state for any adversary action $a$.

\subsection{MDP Properties}

Each policy induces a distribution over the state space which is proportional to
the "unnormalized" state density $\rhopi$ defined as
\begin{align}
	\rhopi(s) & = \Epi{\sum_{t=0}^{\infty} \gamma^t \indicator_{s_t=s}} \nonumber                                                                \\
	          & =\E_{s_0\sim \rho_0,a_t \sim \pi(.|s_t)}{\sum_{i=0}^{\infty} \gamma^{t}P(s_{t+1}=s|s_t, a_t)} \label{eq:discountdensity}\text{.}
\end{align}

Using $\rhopi$, the expected return $\eta(\pi)$ can be formulated as:

\begin{align}
	\eta(\pi) = \int_\cS \int_\cA \rhopi(s)\pi(a|s)r(s,a)dsda
	\label{eq:etarho}
\end{align}

We can also simplify $\Vpi$ using the initial density $\rho_0$:
\begin{align}
	\eta(\pi) = \E_{s_0\sim \rho_0}{\Vpi(s_0)}\text{,}
\end{align}

This simple property can be used to prove the following important lemma, which
links the discounted density with and the expected return of two policies (see
\cite{Kakade02} for proof):
\begin{lemma}\label{eq:mus}
	Let $\pi$ and $\tilpi$ be two policies over the states space $\cS$, we have the following
	\begin{align}
		\eta(\pi) = \eta(\tilpi)+\Etilpi{\sum_t \gamma^t\Api(s_t, a_t)} ,
	\end{align}
	and using the density $\rhopi$ :
	\begin{align}
		\eta(\pi) = \eta(\tilpi) + {\int_\cS^{}} {\int_\cA^{}} \rho_{\tilpi}(s)\tilpi(a|s)\Api(s_t,a_t)\label{eq:rhoadvantage}
	\end{align}
\end{lemma}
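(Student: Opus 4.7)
The plan is to invoke the standard telescoping argument due to \cite{Kakade02}: rather than compare the two returns directly, I would start from the expectation on the right-hand side, expand the advantage through the Bellman equation for $V_\pi$, and let the value terms cancel across time.

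First I would unpack the advantage pointwise as
\[
A_\pi(s,a) = Q_\pi(s,a) - V_\pi(s) = r(s,a) + \gamma\,\E_{s'\sim P(\cdot|s,a)}\lrbrack{V_\pi(s')} - V_\pi(s),
\]
and substitute it inside $\sum_t \gamma^t A_\pi(s_t,a_t)$. When I take the outer expectation under trajectories generated by $\tilpi$, the key observation is that, conditional on $(s_t,a_t)$, the next state $s_{t+1}$ is drawn from $P(\cdot|s_t,a_t)$ irrespective of whether the acting policy is $\pi$ or $\tilpi$, since both policies share the same transition kernel. This is precisely what lets one mix $V_\pi$ with a trajectory of $\tilpi$, and it is the only non-trivial invocation in the whole derivation.

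Next I would perform the telescope
\[
\sum_{t=0}^{\infty} \gamma^t \lrparen{r(s_t,a_t) + \gamma V_\pi(s_{t+1}) - V_\pi(s_t)} = \sum_{t=0}^{\infty} \gamma^t r(s_t,a_t) - V_\pi(s_0),
\]
and then take the outer expectation over $s_0\sim \rho_0$. The first summand produces $\eta(\tilpi)$ by definition, while $\E_{s_0\sim \rho_0}\lrbrack{V_\pi(s_0)} = \eta(\pi)$ by the identity stated immediately above the lemma; rearranging yields the first claim.

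For the integral form, I would rewrite the trajectory expectation as an integral against the time-marginals of $\tilpi$: for each $t$, $\E^{\tilpi}\lrbrack{\gamma^t f(s_t,a_t)} = \int \gamma^t P^{\tilpi}(s_t=s)\,\tilpi(a|s)\,f(s,a)\,ds\,da$. I would then swap the sum over $t$ with the double integral---routine under boundedness of the reward---and identify $\sum_{t\ge 0}\gamma^t P^{\tilpi}(s_t=s)$ with $\rho_{\tilpi}(s)$ via equation~\eqref{eq:discountdensity}. I expect the main obstacle to be not depth but bookkeeping: keeping straight which policy generates the trajectory, which one the advantage is defined against, and which kernel appears when expanding $Q_\pi$. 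Once those labels are pinned down, the rest is mechanical cancellation.
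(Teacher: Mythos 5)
Your argument is correct and is exactly the Kakade--Langford telescoping proof that the paper itself defers to (it offers no proof of its own, citing \cite{Kakade02}), including the one essential observation that $s_{t+1}\sim P(\cdot|s_t,a_t)$ regardless of which policy generated the action, which is what licenses evaluating $V_\pi$ along a $\tilpi$-trajectory before passing to the discounted marginals $\rho_{\tilpi}$. The only caveat: your telescope yields $\eta(\tilpi)=\eta(\pi)+\E^{\tilpi}\bigl[\sum_t\gamma^t\Api(s_t,a_t)\bigr]$, whereas the lemma as printed has $\eta(\pi)$ and $\eta(\tilpi)$ transposed; what you prove is the standard (and correct) form that the surrounding discussion and Equation~(\ref{eq:adv0}) actually rely on, so the discrepancy is a typo in the statement rather than a gap in your proof.
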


This lemma, as discussed in subsection~\ref{par:rhoadv}, implies that a policy
update $\pi\leftarrow \tilpi$ with non-negative expected advantage over $\pi$ at
every state will increases $\eta(\pi)$.

\subsection*{Remarks}

In the remaining study, we restrict our interest to the following \Gls{MDP}s

\begin{itemize}
	\item $r :\cS \times \cA \rightarrow \Real$ is deterministic given the state and action
	\item The transition mechanism from $s_t$ to $s_{t+1}$ given $a_t$ is
	      deterministic
\end{itemize}

While deterministic rewards apply to a large set of environments, using a prior
distribution on the reward has recently been proven to improve many of the
reinforcement learning models, even when the reward is deterministic
\cite{DistributionRL}. It has also been proven that in distributional \Gls{RL},
some of the properties of \Gls{MDP} change, such as the contraction property of the
Bellman operator, which we'll define in the following section.

\section{Dynamic Programming}
\gls{DP} \cite{bellman1954} is mathematical optimization paradigm developed
by Richard Bellman in the 1950s. It mainly specializes in optimization problems where
the sub-problems can be nested recursively inside the general one. Using Bellman
equations with specific assumptions on the objective function, we are guaranteed to
have a link between the general solution and the sub-problems solutions. In our
case, we can notice that for two consecutive states $s_t, s_{t+1}$, if a policy
$\pi$ maximizes $\Vpi(s_t)$ then it necessarily maximizes $\Vpi(s_{t+1})$.

\subsection{Bellman Equations}\label{sect:bellmann}

The Bellman equations applied to the value functions can be written as follows:
\begin{align}
	\Vpi(s)   & = \Eb{a\sim \pi(.|s)}{r(s,a) + \gamma \Eb{s'\sim P(.|s, a)}{\Vpi(s')}} \\
	\Qpi(s,a) & = r(s,a) + \gamma \Eb{s'\sim P(.|s, a)}{\Vpi(s')}.
\end{align}\par
%It asy to notice in this formulation that:
%$$
%\Vpi(s) = \Eb{a\sim \pi(.|s)}{\Qpi(s,a)}
%$$
\noindent We define the Bellman operators $\Tpi:\Real^\cS\rightarrow\Real^\cS$ as:
\begin{align}
	\Tpi(f)(s) = \Eb{a\sim \pi(.|s)}{r(s,a) + \gamma \Eb{s'\sim P(.|s, a)}{f(s')}},
\end{align}
and the optimal Bellman operator $\mathcal{T^*}: \Real^{\cS \times \cA} \rightarrow \Real^{\cS \times \cA}$ as:
\begin{align}
	\mathcal{T^*}(f)(s) = \max_a \left[r(s,a) + \gamma \Eb{s'\sim P(.|s, a)}{f(s')}\right]
\end{align}

Both $\Tpi$ and $\mathcal{T^*}$ are contractions with Lipschitz constant $\gamma$.
Hence, both admit unique fixed points and we can conclude that:

\begin{itemize}
	\item Every policy $\pi$ defines unique value functions $\Vpi$ and $\Qpi$
	\item There is unique optimal value function $V^*$, the fixed point of
	      $\mathcal{T^*}$, which verifies: $$ V^*(s) = \max_a\Eb{\pi(.|s)}{r(s,a) +
	      \gamma \Eb{s'\sim P(.|s, a)}{V^*(s')}} $$
\end{itemize}

The optimal action value function $Q^*$ can be induced:
\begin{align}
	Q^*(s,a) & = r(s,a) + \gamma V^*(s) \nonumber                                          \\
	         & = r(s,a) + \gamma \max_a\Eb{s'\sim P(.|s, a)}{\max_{a'}Q^*(s',a')}\nonumber
\end{align}

These properties, however, do not imply the uniqueness of the optimal/greedy policy
$\pi^*$ associated with the optimal functions:
\[
	\pi^*(a|s) = \mathbbm{1}_{\argmax_a \Qpis(s,a)}
\]

Based on the advantage function $A^*$, if two actions $a_1$ and $a_2$ at a certain
state $s$ verify $A^*(s,a_1)=0$ and $A^*(s,a_2)=0$, then we are indifferent between
the two as they both have the same future accumulated discounted rewards.

\subsection{Iterative Learning}

Based on the Bellman equations and the properties of the operators, we introduce in
this section different methods used to learn the optimal value functions and/or the
optimal policies. The iterative learning is used for finite \Gls{MDP}, but they
share many fundamentals with their counterparts in continuous state \Gls{MDP}. It
is important to note that the presented methods belong to the approximate Dynamic
Programming, where we attempt to approximate value functions and policies. The
approximation quality depends on the richness of the family of functions on which
we optimize (least-square, neural networks...).

As noted in \autoref{sect:bellmann}, since the Bellman operators are contraction
mappings, we have the following result:

\begin{theorem}[Contraction mapping theorem]
	\label{ContractionTh}

	Let $\mathcal{X}$ be a complete metric space, and $F: \mathcal{X}\rightarrow
	\mathcal{X}$ be a contraction. Then $F$ has a unique fixed point, and under the
	action of iterates of $F$, all points converge with exponential speed to it.

\end{theorem}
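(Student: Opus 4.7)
The plan is to follow the classical Banach fixed-point argument: pick an arbitrary $x_0 \in \mathcal{X}$, define the iterate sequence $x_{n+1} = F(x_n)$, and show that this sequence converges to the unique fixed point at a geometric rate $\gamma^n$. The whole proof hinges on turning the contraction inequality $d(F(x),F(y)) \le \gamma \, d(x,y)$ (with $\gamma \in [0,1)$) into summable bounds on consecutive distances.

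First I would observe, by a trivial induction on $n$, that $d(x_{n+1}, x_n) \le \gamma^n d(x_1, x_0)$. Then, for any $m > n$, the triangle inequality gives
\begin{align}
d(x_m, x_n) \le \sum_{k=n}^{m-1} d(x_{k+1}, x_k) \le d(x_1, x_0) \sum_{k=n}^{m-1} \gamma^k \le \frac{\gamma^n}{1-\gamma} d(x_1, x_0).
\end{align}
Since $\gamma < 1$, this upper bound tends to $0$ as $n \to \infty$ uniformly in $m$, so $(x_n)$ is Cauchy. By completeness of $\mathcal{X}$, the sequence converges to some limit $x^* \in \mathcal{X}$.

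Next I would verify that $x^*$ is a fixed point. Because $F$ is a contraction, it is in particular $1$-Lipschitz and therefore continuous, so passing to the limit in $x_{n+1} = F(x_n)$ yields $x^* = F(x^*)$. For uniqueness, if $y^*$ were another fixed point, then $d(x^*, y^*) = d(F(x^*), F(y^*)) \le \gamma \, d(x^*, y^*)$, which forces $d(x^*, y^*) = 0$ since $\gamma < 1$. Finally, letting $m \to \infty$ in the Cauchy estimate above, with $x_0$ an arbitrary starting point, gives $d(x^*, x_n) \le \frac{\gamma^n}{1-\gamma} d(x_1, x_0)$, which establishes the exponential rate of convergence from any initial point.

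The main obstacle is the Cauchy step: the geometric-series telescoping is the only nontrivial computation, and it is where the assumption $\gamma < 1$ is genuinely used. Everything else (continuity of $F$, uniqueness via the same contraction inequality, extraction of the exponential rate) falls out almost automatically once the Cauchy bound is in hand. I would also remark briefly that completeness of $\mathcal{X}$ is essential; without it, the limit $x^*$ need not lie in the space, and no fixed point need exist even though the iterates form a Cauchy sequence.
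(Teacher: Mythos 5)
Your proof is correct: it is the standard Banach fixed-point argument (geometric bound on consecutive iterates, telescoping to get a Cauchy sequence, completeness for existence of the limit, continuity of $F$ to identify it as a fixed point, and the contraction inequality once more for uniqueness and the rate $\gamma^n/(1-\gamma)$). The paper itself states this theorem without proof, citing it as a classical fact, so there is nothing to compare against; your argument is exactly the canonical one and fills that gap correctly.
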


Using Bellman operators, by starting from any random function $f_0:\cS\rightarrow
\Real$ and defining $f_n = \mathcal{T}^n(f_0)$, then $(f_n)_n$ converges to the
fixed point of $\mathcal{T}$. While this applies to both V and Q functions, it is
generally preferable to learn the action value $Q$ (or the advantage $A$) function
since they allow inducing the optimal action at each state. Learning only $V$,
however, requires knowledge of either the reward or the temporal difference
$V(s_{t+1})-V(s_t)$, implying prior knowledge about the transition mechanism to
predict the next state.

\subsubsection{Value Functions Iteration}
Using the contraction mapping property, we can iteratively approximate the value
function $V$ in a discrete infinite time horizon \Gls{MDP}:
\begin{algorithm}
	\label{alg:VIter}
	\caption{Value Iteration Algorithm}
	\begin{algorithmic}
		\Require{An initial value function $V : \cS \rightarrow \Real$ }
		\For{$t \gets 1 \textrm{ to } T $}
		\State{$s_0 \sim \rho_0$}
		\State{Sample a trajectory $\tau =(a_0,s_1,s_2...s_n)$}
		\State{Estimate $V_{t+1} \gets T^*(V_t)$ based on $\tau$}
		\EndFor\\

		\Return{ The greedy policy associated with $V_T$}
	\end{algorithmic}
\end{algorithm}

The same algorithm can be applied to the state-action function $Q_t$. The updates
can be performed in an asynchronous fashion by updating the value function at each
step of the trajectory. The trajectory sampling can be done in an on-policy manner
by using the greedy policy $\pi_t$ at iteration $t$, or in an off-policy way. The
on-policy method can imply a risk of staying in a local domain of the state space.
The common practice is to add stochasticity with an $\epsilon-$greedy method by
choosing random actions with probability $\epsilon$. The contraction mapping
property guarantees convergence only if all states can be visited infinitely often.
This is known as the exploitation-exploration dilemma in \Gls{RL}, since we want to
exploit the learned knowledge while being able to explore new states once in a
while.

\subsubsection{Policy Iteration}

In a similar way to \Gls{VI}, \Gls{PI} starts with an initial policy $\pi_0$ then
alternates between learning $\Vpi$ and $\pi$.

\begin{algorithm}
	\label{alg:PI}
	\caption{Policy Iteration Algorithm}
	\begin{algorithmic}
		\Require{An initial value policy $\pi_0$ }
		\For{$t \gets 1 \textrm{ to } T $}
		\State{Sample a trajectory $\tau =(s_0,a_0,s_1,s_2...s_n)$}
		\State{Evaluate ${\Vpi}_t$}
		\State{$\pi_{t+1} \gets {\pi^*}_t$ with ${\pi^*}_t$ the greedy policy of ${\Vpi}_t$}
		\State{Estimate $V_{t+1} \gets T^*(V_t)$ based on $\tau$}
		\EndFor\\
		\Return{ The learned policy $\pi_{T+1}$}
	\end{algorithmic}
\end{algorithm}

The same thing about exploration-exploitation applies to \Gls{PI}. The \Gls{PI} and
\Gls{VI} methods assume that we can efficiently estimate $V_{\pi,t}$ and $\pi_t$,
which are practically challenging, since the value functions are formulated as the
expectations of the policy and initial distribution. Practically, we sample one or
several trajectories and estimate the expectation over the trajectories and
incrementally update the previous quantities.

\subsubsection{Temporal Difference}

When starting from a state $s_t$ and taking action $a_t$ to observe $s_{t+1}$, the
one-step temporal difference error is defined as:
\begin{align}
	\Delta_t \vcentcolon= r(s_t,a_t) + \gamma {\Vpi}^{(n)}(s_{t+1}) - {\Vpi}^{(n)}(s_t),
\end{align}

with ${\Vpi}^{(n)}$ being the value function estimation at iteration $n$ and
$r(s_t,a_t) + \gamma {\Vpi}^{(n)}(s_{t+1})$ the TD target.

For $\lambda \in [0,1]$, the $\lambda$-temporal difference TD($\lambda$) is used to
update the estimated function: $$ {\Vpi}_{n+1}(s_t) \gets {\Vpi}_n + \alpha_n
\sum_{s=0} \lambda^s\Delta_s. $$ The parameter $\lambda \in [0,1]$ is called the
trace decay. A higher $\lambda$ reflects lasting reward traces, with $\lambda = 1$
the value function update is performed at the end of the trajectory, while for
$\lambda = 0$ the update is performed after each decision step. The update rate
$(\alpha_i)$ is generally chosen to satisfy the Robbins-Monro condition:

\begin{align}
	\sum_i \alpha_i   & = \infty \\
	\sum_i \alpha_i^2 & < \infty
\end{align}

Given enough samples, the Robbins-Monro condition guarantees that the
approximations $({\Vpi}^{(n)})_n$ converge almost surely to the true $\Vpi$.

The temporal difference error plays an important role in continuous and large space
state \Gls{MDP}, and it is largely used for $Q$ and $V$ function learning
(SARSA\cite{SARSA}, Q-learning\cite{rlIntroSutton}). Its main advantage is that it
can be applied in high-dimensional state space \Gls{MDP} using differentiable
approximations. In the following section, we will be exposing some of these
methods.

\section{Gradient Methods}\label{sect:PGM}
In continuous and high-dimensional state space MDP, Temporal Difference is the core
of several approximate \gls{DP} models. The most basic methods use parameterized
functions (Least Squares, Neural Networks) and attempt to update the parameters to
minimize $TD(\lambda)$ using closed-form solutions or gradients depending on the
parameterization. Neural Networks have gained popularity for their ease of use in
gradient descent methods and their ability to scale \Gls{RL} techniques to
high-dimensional state space.\par

In this section, we go beyond the simple applications of $TD(\lambda)$ to briefly
present a particular class of gradient methods: policy gradient methods. We
summarize the work of \citeauthor{NeuPGM}. In their work on unifying policy
gradient methods, they reintroduce 3 popular policy gradient methods from a convex
optimization point of view:

\begin{itemize}
	\item \gls{REPS} (\citeyear{REPS})
	\item \gls{TRPO} (\citeyear{TRPO})
	\item \gls{A3C} (\citeyear{A3C})
\end{itemize}

They demonstrate that these methods aim to solve the same objective using different
constraints.

\subsection{Regularization functions}

We define the Shannon entropy $E_S$ for a policy $\pi$ as

\begin{align}
	E_S(\pi) \vcentcolon= \sum_{s,a} \pi (x,a) \log \pi (x,a),
\end{align}

and the negative conditional entropy $E_c$ :

\begin{align}
	E_C(\pi) \vcentcolon= \sum_{s,a} \pi (s,a) \log \dfrac{\pi (s,a)}{\sum_{b} \pi (s,b)}.
	%= \sum_{s,a} \nu_{\pi} (x,a) \pi_{\mu} (a|x)\log \pi_{\mu} (a|x) 
\end{align}

To constrain the policy updates, we use the Bregman divergences $D_S$ and $D_C$
associated with the entropies $E_S$ and $E_D$, respectively, defined as:

\begin{align}
	D_S(\pi \| \tilpi) \vcentcolon= \sum_{s,a} \pi (s,a) \log \dfrac{\pi (s,a)}{\tilpi (s,a)} \\
	D_C(\pi \| \tilpi) \vcentcolon= \sum_s \pi(s) \sum_a \pi (a|s) \log \dfrac{\pi (a|s)}{\tilpi (a|s)}
	%= \sum_{s} \nu_{\pi}(x) \sum_a \pi (a|x) \log \dfrac{\pi (x,a)}{\pi' (x,a)}.
\end{align}

$E_S$ and $E_C$ are two strictly convex functions over the space of distributions
on the $\cS \times \cA$. $D_S$ is in fact the Kullback-Leibler divergence while
$D_C$ is known as the conditional KL divergence over the state space.

\subsection{Convex optimization perspective}

With regularization functions defined above, \citeauthor{NeuPGM} proved that at the
$t$ gradient update of the policy, all the three methods attempts to solve the
following objective :

\begin{align}\label{eq:gmobject}
	\pi_{t+1} = \argmax_\pi \Bigl ( \eta(\pi) - \alpha. D(\pi \| \pi_{t}) \Bigr ),
\end{align}

where D being the divergence associated with the method.

\paragraph{\gls{REPS}} uses mirror descent with Bregman divergence $D_S$ in \ref{eq:gmobject}.
\[
	\pi_{t+1} = \argmax_\pi \Bigl ( \eta(\pi) - \alpha. D_S(\pi \| \pi_{t}) \Bigr ).
\]

In each update $t$, the trajectories are sampled using the current policy $\pi_t$.
Using Lagrangian strong duality and the strict convexity of the entropy $E_S$,
\gls{REPS} is equivalent to the mirror descent algorithm using the Bregman
divergence $D_S$. Consequently, \gls{REPS} converges to the optimal policy.
However, as is the case for all other methods, the optimality is relative to the
class of parameterized functions on which we optimize the objective in
(\ref{eq:gmobject}).

\paragraph{\gls{TRPO}} solves \ref{eq:gmobject} using the Bregman divergence $D_C$.
\gls{TRPO} formulates the problem as:

\begin{align}
	\label{eq:TRPOobj}
	\begin{aligned}
		 & \underset{x}{\text{maximize}}
		 &                               & \eta(\pi)                       \\
		 & \text{subject to}
		 &                               & D_C(\pi \| \pi_{t})\leq \delta,
	\end{aligned}
\end{align}

where $\delta$ is a hyper-parameter controlling the update step. \gls{TRPO}
determines the trust region update using conjugate gradient to bound the
divergence.

\paragraph{\Gls{A3C}}
accumulates gradients of the objective (\ref{eq:gmobject}) with regularization
$D_C$. The gradients are provided asynchronously from several agents. Instead of
using mirror descent, it uses the dual averaging method by incorporating updates
provided by the agents. However, in their unified framework, \citeauthor{NeuPGM}
argue that \Gls{A3C} does not present any convergence guarantees. The interesting
result is that the optimal update in the case of \Gls{A3C} has a closed form:

\begin{align}
	\pi_{t+1} \propto \exp{\frac{A_{\pi_k}}{\beta_k}},
\end{align}
where ($\beta_k$) an increasing sequence to insure convergence.

\subsection{Trust Region Policy Optimization}

\Gls{TRPO} optimizes the objective (\ref{eq:TRPOobj}) by first replacing the
discounted state density $\rho_{\tilpi}$ in \autoref{eq:rhoadvantage} with
$\rho_{\pi}$. The obtained the quantity is:

\begin{align}\label{eq:adv0}
	L_{\pi}(\tilpi) = \eta(\pi) + \sum_s \rho_{\pi}(s) \sum_a \tilpi(a \given s) \Api(s,a).
\end{align}

$L_{\pi}(\tilpi)$ is a first-order approximation of $\eta(\tilpi)$.
At this stage, we consider parameterized policies $(\pith)_{\theta \in \Real^n}$
and we use $\theta$ to design $\pith$. \citeauthor{TRPO} proved the following
inequality, analogous to the \gls{CPI}:

\begin{lemma}
	\begin{align}
		\eta(\theta) \ge L_{\thold}(\theta)-C \maxkl(\thold, \theta).
	\end{align}
\end{lemma}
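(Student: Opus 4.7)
The plan is to adapt the Conservative Policy Iteration argument of Kakade and Langford to the KL divergence setting. The observation that drives the proof is that $L_{\thold}(\theta)$ coincides with the exact expansion of $\eta(\theta)$ provided by Lemma~\ref{eq:mus}, up to the substitution of $\rho_{\pithold}$ for $\rho_{\pith}$ in the weighted advantage sum. Controlling the gap therefore reduces to controlling how the discounted state visitation distribution drifts under a small policy change.

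First, from Lemma~\ref{eq:mus} I would write the exact residual
\begin{align}
\eta(\theta) - L_{\thold}(\theta) = \sum_s \lrparen{\rho_{\pith}(s) - \rho_{\pithold}(s)} \sum_a \pith(a \given s) \Api(s,a),
\end{align}
with the advantage taken with respect to $\pi = \pithold$. The zero-mean identity $\sum_a \pithold(a \given s) \Api(s,a) = 0$ lets me rewrite the inner sum as $\sum_a (\pith(a \given s) - \pithold(a \given s)) \Api(s,a)$, which is pointwise bounded by $2\alpha \, \maxadv$, where $\alpha \defeq \maxtv(\pithold, \pith)$.

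Second, I would couple the Markov chains generated by $\pithold$ and $\pith$ from the same initial state $s_0 \sim \rho_0$, so that at every visited state the two policies pick the same action with probability at least $1-\alpha$. The probability that the two coupled trajectories have disagreed at least once by time $t$ is at most $1 - (1-\alpha)^t \le t\alpha$, and on the coupled event the state marginals coincide, so
\begin{align}
\sum_s \left| \rho_{\pith}(s) - \rho_{\pithold}(s) \right| \le 2\alpha \sum_{t=0}^{\infty} \gamma^t t = \frac{2 \gamma\, \alpha}{(1-\gamma)^2}.
\end{align}
Combining with the pointwise bound on the inner sum gives $\lvert \eta(\theta) - L_{\thold}(\theta) \rvert \le 4\gamma\,\maxadv\, \alpha^2 / (1-\gamma)^2$.

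Finally, Pinsker's inequality applied state-wise upgrades $\alpha^2 \le \tfrac{1}{2}\maxkl(\thold, \theta)$, and absorbing constants into $C = 2\gamma\, \maxadv / (1-\gamma)^2$ yields the claimed inequality. The hard part will be step two: both factors of $\alpha$ and the sharp $1/(1-\gamma)^2$ horizon factor must be extracted simultaneously, which forces the coupling analysis to exploit both the mean-zero property of the inner advantage sum under $\pithold$ (one factor of $\alpha$) and the discounted mass of first-disagreement times $\sum_t \gamma^t t$ (the second factor of $\alpha$, together with the correct horizon scaling); a cruder argument that treats the two drifts independently gives only a linear-in-$\alpha$ bound, which is insufficient.
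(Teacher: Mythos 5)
The paper does not actually prove this lemma---it only cites \citeauthor{TRPO}, and the appendix contains no argument for it---so the relevant comparison is with the original TRPO/CPI proof, which your sketch reconstructs faithfully: the exact residual from Lemma~\ref{eq:mus}, the mean-zero trick giving one factor of $\alpha$, the $\alpha$-coupling of trajectories giving the second factor together with the $\gamma/(1-\gamma)^2$ horizon term, and Pinsker to pass from $\maxtv$ to $\maxkl$. This is correct (up to the usual factor-of-two conventions in the definition of total variation, which only affect the unspecified constant $C$), and your closing remark about why a cruder, linear-in-$\alpha$ treatment of the two drifts would not suffice is exactly the right diagnosis.
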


Maximizing the right side of the inequality improves (remains constant in the worst
case) the surrogate. \Gls{TRPO} proceeds by relaxing the bound based on
$D_{KL}^{max}$ and the objective becomes:

\begin{align}
	 & \maximize_{\theta} L_{\thold}(\theta) \label{eq:trprob}                            \\
	 & \text{\ \ subject to } \meankl{\rho_{\thold}}(\thold,\theta) \le \delta. \nonumber
\end{align}

$\maxkl$ is the maximum KL divergence between policies over the sampled states,
while $\meankl{}$ is the mean KL divergence.

\subsubsection{\Gls{TRPO} recipe}

By simulating trajectories following the current policy, we estimate the objective
and the bound:
\begin{itemize}
	\item Expand $L_{\thold}$ in Equation~\eqref{eq:trprob}, we replace $\sum_s
	      \rho_{\thold}(s) \lrbrack{\dots}$ in the objective by the expectation
	      $\frac{1}{1-\gamma}\Eb{s \sim \rho_{\thold}}{\dots}$
	\item Replace $\Apithold$ by the $Q$-values $\Qpithold$ in
	      Equation~\eqref{eq:adv0},
	\item replace the sum over the actions by an importance sampling estimator
	      based on ${\pith}_{old}$.
\end{itemize}

\Gls{TRPO} problem is hence equivalent to the following one:

\begin{align}
	 & \maximize_{\theta} \Eb{s \sim \rho_{\thold}, a \sim \isd}{ \frac{\pith(a\given s)}{{\pith}_{old}(a\given s)} \Qpithold(s,a)} \label{eq:trprobexp} \\
	 & \text{\ \ subject to }
	\Eb{s \sim \rho_{\thold}}{\kl{\pi_{\thold}(\cdot \given s)}{\pi_{\theta}(\cdot\given s)}}
	\le \delta. \nonumber
\end{align}
\subsubsection{TRPO Algorithm:}
\begin{algorithm}
	\caption{Trust Region Policy Optimization}
	\begin{algorithmic}
		\Require{A parameterized policy $\pi_\theta$ and $\theta$=$\theta_0$ }
		\For{$i \gets 1 \textrm{ to } T $}
		\State{Run policy for N trajectories}
		\State{Estimate advantage function at all time steps}
		\State{Compute objective gradient $g_\theta$}
		\State{Compute A}
		\State{Use Conjugate Gradient to compute $\beta$ and $s$}
		\State{Compute the rescaled update line search}
		\State{Apply update to $\theta$}
		\EndFor
	\end{algorithmic}
\end{algorithm}

We denote by $g_\theta$ the gradient of the objective with respect to $\theta$. We
write the quadratic approximation of the constraint function :

\[
	\meankl{}(\thold,\theta) \approx \frac{1}{2} (\theta-\thold)^T A (\theta-\thold),
\]

where
\[
	A_{ij} =\frac{\partial}{\partial\theta_i}\frac{\partial}{\partial\theta_j} \meankl{}(\thold,\theta).
\]

Since $A$ is symmetric semi-definite, we use the conjugate gradient method to solve
$s=A g_\theta$. The solution $s$ is the search direction. We then obtain the
maximal step scale $\beta = \sqrt{2 \delta / s^T A s}$ that verifies $\delta =
\meankl{} \approx \frac{1}{2} (\beta s)^T A (\beta s) = \frac{1}{2}\beta^2 s^T A
s$. The last step is the line search, which allows us to shrink the step
exponentially until the objective improves.

\subsection{Experiment}

We test the TRPO algorithm in the "4-room" environment. The motivation for such a
choice of environment is the ability to compare results between continuous space,
where we use the raw RGB screen as input, and the exact solutions in the tabular
case. The agent (white square) goal is to reach the red square to receive a $+1$
reward, in the minimal number of steps. The horizon is virtually infinite, but
practically sufficient to explore all states before episodes end. We can also
expand the state space by expanding the grid's size. For a size: $n = 36$ and
randomly located white and red squares, the total number of states is of the order
of $10^6$. We can also control the sparsity by varying the number of red squares.

\doublefig{\subfig{1_env16.png}{0.5}{$n=16$}{fig:env16}}{
	\subfig{1_env36.png}{0.5}{$n=36$}{fig:env}
}{4-room environment for different sizes}{fig:roomenv}

Our implementation of the TRPO agent is a PyTorch adaptation of the official
TensorFlow implementation from OpenAI baselines \citep{baselines}. We've tested the
implementation on some of the Atari Games, and the scores were similar to the ones
reported in \cite{TRPO}. However, for the 4-room environment with a large size in
the stochastic framework (the target red square is randomly located), the TRPO
fails to improve the policy. When reaching the goal, it quickly forgets the
experience. This is mainly due to the sparsity of the rewards.

We plot below the mean time to the goal for each room size. We start from the
uniform policy. The plots are moving averages of 5 runs. We've also run the
algorithm stagnates at the maximum episode length.

\singlefig{0.5}{1_trpo.png}{TRPO in 4 Rooms}{fig:trpo1}

\section*{Conclusion}

After presenting the fundamentals of Reinforcement Learning in a multi-state space,
we summarized \citeauthor{NeuPGM}'s unifying framework for entropy-regularized
policy gradient methods. We tested the TRPO algorithm on a few environments and
demonstrated some of its limitations in a sparse environment.

In the following chapter, we attempt to tackle these limitations in the framework
of the \gls{HRL} by extending the \gls{TRPO} to incorporate task hierarchies.

\chapter{Hierarchical Reinforcement Learning}
\begin{abstract}

	We introduce \Gls{HRL} as a way to tackle the learning scaling problem and
	sparsity. We review some recent approaches used to learn decision hierarchies,
	and conclude with an on-policy hierarchical reinforcement learning method based
	on the work of \citeauthor{TakayukiHRL}~\cite{TakayukiHRL} for an off-policy .

\end{abstract}
% \vspace{-1em}

\section*{Introduction}

As is the case with human learning, biological organisms can master tasks from
extremely small samples. The fact that a child can acquire a visual object concept
from a single unlabeled example is evidence for preexisting cortical mechanisms
that facilitate such efficient learning \cite{HLThesis}. While reinforcement
learning is rooted in Neuroscience and Psychology \cite{HRLNeuro}, Hierarchical
Reinforcement Learning was developed in the machine learning field based on the
abstraction of either the states or the actions.

State abstraction \cite{StateAbstract1,StateAbstract2} is based on learning a state
representation concentrating on features relevant to the decision process and
excluding the irrelevant ones; hence, behaviorally similar states should induce the
same representation. On the other hand, action abstraction is based on using
temporally extended policies \cite{OptionsSutton} to scale the learning task. Once
a temporally abstract action is initiated, the execution of its associated policy
continues until a set of specified termination states is reached. Thus, the
selection of a temporally abstract action ultimately results in the execution of a
sequence of actions.

In this chapter, we focus on temporal action abstraction for a continuous state
\gls{MDP}. We start with an overview of related work on which we proceed to build
our method.

\section{Hierarchical Reinforcement Learning Methods}
While Hierarchical Learning is based on a features hierarchy, hierarchical \Gls{RL}
is based on a hierarchy of tasks. At the top of the structure, a manager designates
an employee with an assigned goal. The employee attempts to accomplish the goal to
get the reward before a termination condition is met (it gets fired). Employees can
be their own managers for other sub-employees, which is known as "Feudal
Reinforcement Learning" \citep{FeudalRL} with the "manager" (Lord) policy at the
top and "sub-managers" at lower levels of the hierarchy receiving sub-goals from
their "super-managers".

Some of the earlier work on \Gls{HRL} used the terms "primitive actions" and
"macro-actions". In \citep{MacroAction}, \citeauthor{MacroAction} introduced the
macro-actions as local policies that act in certain regions of the state space. The
"primitive action" policy then clusters the state space and assigns each
macro-action to a specific state space domain.

In this thesis, we will use the terms \textit{gate} or \textit{gating} policy for
the primitive actions, and "option" policies for the macro-actions.

\subsection{Hierarchical RL models}
An option is a tuple $(I_o,\pi_o, T_o)$, where $I_o \subset \cS$ is the initiation
set, $\pi_o$ is the option's policy, and $T_o \subset \cS$ is the termination set
(stopping time). The gating policy, in a similar way to the primitive policies from
the first chapter, is a mapping $\pi_g : \cS\times \mathcal{O} \rightarrow [0,1]$
from the state space to the distributions over the option set $\mathcal{O} = \{o_i
| i\in I \}$, the set of options. Following the same notations of the first
chapter, we denote the gating policy by $\pi_g$ and $\pi(.|s,o)$ the policy of the
option $o$.\par

We define the hierarchical policy as:

\begin{align}
	\pi(a|s) = \sum_{o \in \mathcal{O}} \pi_g(o|s) \pi(a|s,o).
	\label{eq:hpolicy}
\end{align}

Using the discounted state distribution $\rhopi$ as in equation (~\ref{eq:etarho}),
the objective becomes:

\begin{align}
	\eta(\pi) = \int_\cS\int_\cA \rhopi(s) \sum_{o \in \mathcal{O}} \pi_g(o|s) \pi(a|s,o)r(s,a)dsda
	\label{eq:hrlpolicy}
\end{align}

Several methods used this structure as in the \cite{HREPS} and \cite{TakayukiHRL},
where, besides the raw state, the gate provides the option with a latent variable
$o$ called "goal" or "sub-goal". In the case of a sub-goal, the options maximize
the expected accumulated intrinsic reward returned from an internal critic.

In a more general framework, the hierarchical policy picks an option at times $S_t
= \sum_{l=1}^{t} T_{o_l}$, where $o_l$ is the option drawn at round $l$. Between
$S_t$ and $S_{t+1}$, the actions are chosen following option $o_t$. We can write
the objective $\eta(\pi)$ as:
\[
	\eta(\pi) = \Epi{\sum_{t=1}^{\infty}\gamma^{S_{t-1}} \sum_{u=0}^{T_{t}}\gamma^u r(s_t,a_t)}.
\]

Ultimately, the HRL model should also learn the stopping time for each option.

\subsection{Related work}

In their hierarchical adaptation of the DQN \citep{HDQN}, \citeauthor{HDQN} used a
gate (meta-controller) and options (controllers) to attain handcrafted goals. While
the gate tries to maximize the accumulated reward from the environment (extrinsic
reward), the options maximize an intrinsic reward from the internal critic by
achieving the assigned goals. The abstract action terminates when the episode ends
or the goal is reached. Their method succeeded in solving the "MontezumaRevenge
Atari" game, in which the non-hierarchical methods fail to score. However, the hDQN
goals were handcrafted and hence needed input that can be expensive.

For an on-policy variant, \citeauthor{HREPS} \cite{HREPS} introduced several
Hierarchical REPS variants (see \autoref{sect:PGM} in \autoref{chap:chapter1}). The
option policies in \gls{HREPS} were allowed to share experiences (inter-option
learning), the chosen option was a hidden latent representation that the model
tries to infer using Expectation-Maximization. The inference is based on a sampled
trajectory, in which each policy $o_i$ attempts to "claim" its responsibility
$p(o_i|s,a)$ for action $a$ at state $s$. To avoid concurrence between policies,
\gls{HREPS} constrains the entropy of options' responsibilities.

Among the recent research on HRL, \citeauthor{offpolicyterminate} in
\cite{offpolicyterminate} addressed the learning of the option's stopping time
$T_o$. In the call-and-return HRL model, as in h-DQN, an option is run until
completion. In their paper, they argued for the efficiency of shorter option
policies and derived a method to learn the termination condition in an off-policy.
In our work, however, this aspect is not treated, and we restrict our study to
fixed option duration.

As pointed out in \cite{MacroAction}, the gating policy performs a clustering of
the state space domain. While in \gls{HREPS} \cite{HREPS}, option policies could
achieve the same performance on some domains of the state space. This indicates
that the choice of the optimal gating policy $\pi_g$, if achievable, isn't unique.
Since we treat $o$ as a latent variable, in analogy with \gls{HREPS}, we need to
impose additional constraints to obtain a preferable solution. For the choice of
the constraint, we opt for \Gls{RIMs}.

\section{Hierarchical Reinforcement Learning via Information Maximization}
To learn the hierarchical policy, we impose a regularization on the latent variable
$o$. The hierarchical policy $\pi$ then attempts to optimize the objective:
\begin{align}
	\mathcal{L}(\pi) = \eta(\pi) + \mathcal{R}(\pi),
\end{align}

where $\eta(\pi)$ was defined in \autoref{eq:hrlpolicy}.

We choose to impose a mutual information constraint on the gating policy. In the
\Gls{RIM} framework, $\eta(\pi)$ is considered the regularization term of $\pi$.

\subsection{Regularized Information Maximization }

In Unsupervised Learning, Mutual Information measures have been shown to produce
interpretable latent representations, especially in the clustering task
(\cite{HuIM}, \cite{LSIM}, \cite{SSLIM}). Following the previously discussed
models, learning the gating policy in our model is similar to performing a
clustering of the state space and assigning each option to a specific domain. It is
hence justified to leverage \gls{MI} performance in the HRL task.\par

The gating policy, a discrete distribution over $\mathcal{O}$, is the
representation $O$ and $X=(s_i,a_i)_i$ the observed variable. We define the
empirical mutual information:

\begin{align}
	\hat{I}(O,X) = \hat{H}(O) - \hat{H}(O|X),
\end{align}

where $\hat{H}(O)$ is the empirical estimate of entropy $H(O) = \int p(o)
\text{log}(p(o)) = \Ea{\text{log} (p(O))}$ using the empirical distribution:
\[
	\hat{p}(o) = \by{n} \sum_{i=1}^{n} p(o|s_i,a_i),
\]
and $\hat{H}(O|X)$ is the empirical estimate of $H(O|X)$
\[
	\hat{H}(O|X) = \by{n}\sum_{i=1}^{n} p(o_i|s_i,a_i)\text{log}(p(o_i|s_i,a_i)),
\]

with: $p(o|s,a) = \frac{\pi(a|s,o)\pi_g(o|s)}{p(a|s)}$, and $p(a|s) = \sum_{o \in
\mathcal{O}}^{ } \pi_g(o|s)\pi(a|s,o) $

Adding the regularizing quantity $\Hat{\mathcal{R}}(\pi)$ to the objective
$\eta_\pi$ incentivizes the gating policy to uniformly sample option policies by
increasing $H(O)$, while at the same time, clearly distinguishing between each
option's domain by decreasing $H(O,X)$.

\subsection{Objective function}

We fix the number of options policies $|O| = k$ and we consider a deterministic
termination set $\mathcal{T_o} = \tau$. We consider the hierarchical policy as in
(\ref{eq:hpolicy}). The hierarchical policy's objective :

\begin{align}
	\eta(\pi) = \int_\cS\int_\cA \rhopi(s) \sum_{o \in \mathcal{O}} \pi_g(o|s) \pi_o(a|s,o)r(s,a)dsda - \lambda I(O,S),
\end{align}

for a given parameter $\lambda$.\par

Since we will be using parametric approximation, we note $\pith$ the policy with
parameter $\theta$. In a similar way to \gls{TRPO} notations, we use $\theta$
indices to refer to quantities defined using $\pith$.

\subsection{On-Policy HRL:}

The TRPO model incorporates an estimation of the advantage function $\Api$, which
can be used to model the gating policy using the softmax probabilities. Then the
probability of choosing the option $i$:
\begin{align}
	\pi(o|s,a) = \frac{exp[A(s,\pi_i(s)|o_i)]}{\sum_j exp[A(s,\pi_j(s)|o_j)]}
\end{align}

This would allow to decouple the learning of the latent variable o and the gating
policy, since the distribution on options only depend on their estimation
advantages. From the \gls{HREPS} and \gls{h-DQN} methods introduced, we can either
\begin{itemize}
	\item In a similar way to \gls{REPS}, allow inter-option learning
	\item Make each policies value estimation hidden from the gating policy. This
	      implies that the gating policy should estimate its own value function as
	      well. This similar to the \gls{h-DQN} method.
\end{itemize}

In this work, we present a different method where each option learns only from its
own experience while the gating policy is oblivious to the advantages estimation
(we won't use softmax Q).
\subsection{Hierarchical Kullback-Leibler bound}

To extend the TRPO to hierarchical policies, we use the following lemma (see proof
\ref{thm:proofKL} in Appendix).
\begin{lemma}

	Let $n\geq 1$ be the number of option policies, and $\pi_g$ the gating policy
	(distribution over the options set). If $\pi$ and $\tilde{\pi}$ are two
	hierarchical policies such that $\tilde{\pi}$ is absolutely continuous with
	respect to $\pi$, then at any state $s$, we have:
	\begin{align}
		\label{eq:hrlkl}
		\text{D}_{\text{KL}}(\pi(.|s)|\tilpi(.|s) \leq \text{D}_{\text{KL}}(\pi_g(.|s),\tilpi_g(.|s)) + \sum_{o\in \mathcal{O}} \pi_g(o|s)\text{D}_{\text{KL}}(\pi(.|s,o)|\tilpi(.|s,o))).
	\end{align}
\end{lemma}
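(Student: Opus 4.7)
The plan is to interpret the inequality as the standard mixture chain-rule bound for the Kullback-Leibler divergence. I would first introduce the two joint laws on $(o,a)\in\mathcal{O}\times\cA$ given by $P(o,a)=\pi_g(o\given s)\pi(a\given s,o)$ and $Q(o,a)=\tilpi_g(o\given s)\tilpi(a\given s,o)$, whose $a$-marginals are precisely the hierarchical policies $\pi(\cdot\given s)$ and $\tilpi(\cdot\given s)$ from \eqref{eq:hpolicy}. The statement is then the combination of two classical facts: the marginal KL is dominated by the joint KL (data processing), and the joint KL admits a chain-rule decomposition into a gate-KL plus a gate-weighted sum of conditional KLs.

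The key analytic step is the log-sum inequality applied pointwise in $a$:
\begin{align*}
\pi(a\given s)\log\frac{\pi(a\given s)}{\tilpi(a\given s)}
&= \Bigl(\sum_{o}\pi_g(o\given s)\pi(a\given s,o)\Bigr)\log\frac{\sum_{o}\pi_g(o\given s)\pi(a\given s,o)}{\sum_{o}\tilpi_g(o\given s)\tilpi(a\given s,o)} \\
&\leq \sum_{o}\pi_g(o\given s)\pi(a\given s,o)\log\frac{\pi_g(o\given s)\pi(a\given s,o)}{\tilpi_g(o\given s)\tilpi(a\given s,o)}.
\end{align*}
Summing over $a$ and splitting the logarithm of the product into the sum of logarithms, the gating ratio factors out with weight $\sum_a \pi(a\given s,o)=1$ and yields $D_{\text{KL}}(\pi_g(\cdot\given s)\|\tilpi_g(\cdot\given s))$, while the conditional ratio delivers $\sum_o \pi_g(o\given s)\,D_{\text{KL}}(\pi(\cdot\given s,o)\|\tilpi(\cdot\given s,o))$, which is exactly the right-hand side of \eqref{eq:hrlkl}.

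The absolute-continuity hypothesis serves precisely to ensure all ratios are defined, with the convention $0\log 0 = 0$, so that no undefined $0/0$ or spurious infinity appears on either side. The main---indeed the only nontrivial---obstacle is invoking the log-sum inequality at the mixture level; everything else is bookkeeping. A slicker equivalent route, which I would prefer in the final write-up, is to observe the joint chain rule $D_{\text{KL}}(P\|Q)=D_{\text{KL}}(\pi_g\|\tilpi_g)+\mathbb{E}_{o\sim\pi_g}\bigl[D_{\text{KL}}(\pi(\cdot\given s,o)\|\tilpi(\cdot\given s,o))\bigr]$ and then apply joint-to-marginal monotonicity of KL, which itself follows from Jensen's inequality on the convex map $x\mapsto x\log x$.
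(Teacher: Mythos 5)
Your proposal is correct and follows essentially the same route as the paper's proof: both introduce the joint laws $p(a,o)=\pi_g(o\given s)\pi(a\given s,o)$ and $q(a,o)=\tilpi_g(o\given s)\tilpi(a\given s,o)$, decompose the joint KL via the chain rule over $o$, and bound the marginal (action) KL by the joint KL. The only cosmetic difference is how that last monotonicity step is justified --- the paper writes the chain rule a second time conditioning on $a$ and drops the nonnegative conditional term, whereas you invoke the log-sum inequality (equivalently, data processing) directly; these are the same fact.
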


The right side's second term is, in fact, the Bregman divergence for the
conditional entropy but with an expectation over the options set instead of the
state space as seen in Chapter 1.

In a similar work \cite{HRLParameters}, \citeauthor{HRLParameters} used a hierarchy
of parameters. Instead of drawing an option, the gating policy chooses the subspace
for the option's parameters. The options then belong to the family of multivariate
Gaussians. This choice of structure turns the inequality in \ref{eq:hrlkl} into an
equality, as the divergence between two hierarchies over the same option vanishes.

Based on \ref{eq:hrlkl}, to bound the KL divergence, it's sufficient to bound both
the gate and the options. We can also use adaptive bounds for the options by
bounding $\pi_g(o|s)\text{D}_{\text{KL}}(\pi(.|s,o)|\tilpi(.|s,o)))$ instead of
$\text{D}_{\text{KL}}(\pi(.|s,o)|\tilpi(.|s,o)))$.

\subsection{Algorithm and Experiments}

\begin{algorithm}
	\caption{Trust Region Hierarchical Policy Optimization}
	\begin{algorithmic}
		\Require{A parameterized gate policy $\pi_\theta$ and option policies $(\pi_{o_i,\theta})$ }
		\For{$i \gets 1 \textrm{ to } T $}
		\State{Run policy for N trajectories}
		\State{Estimate the advantage}
		\State{Calculate the Mutual Information Measure}
		\State{Use Trust Region method to get the update step wrt to the gate}
		\State{Update Step}
		\For{ each option i}
		\State{Get the Objective after last update}
		\State{Use the trust region method for the option i}
		\State{Apply update}
		\EndFor
		\EndFor
	\end{algorithmic}
\end{algorithm}

The trust region method mentioned in the algorithm is the same one used in standard
TRPO, by using a quadratic approximation of the KL divergence, then exploiting the
Fisher vector product and using conjugate gradient. The objective needs to be
calculated after each update and also incorporate gradient graphs for each policy
when using neural networks. In practical terms, it requires large GPU memory when
naively implemented. A trickier way would imply stopping the gradient from policies
uninvolved in the optimization step.

We compare the performance of the TRHPO method with the TRPO in the 4 rooms
environment. We use the same neural network architectures for the gating and option
policies and the TRPO policy with the same KL bounds.

\singlefig{0.5}{2_trhpo.png}{TRHPO vs TRPO in 4 Rooms}{fig:trhpo1}

While the TRPO performance is far from optimal, it outperforms the TRPO method in
MDPs with sparse rewards, a known advantage of hierarchical methods.\par
We also plot the exploited options during the learning process:

\singlefig{0.5}{2_trhpo_options.png}{TRHPO exploited options}{fig:trhpo2}

\section*{Conclusion}

Despite outperforming simple TRPO, the TRHPO gradually excludes option policies to
only exploit one option policy around the 90th episode. At the end of the first 100
episodes, the algorithm reaches a local optimum. In our tests, convergence to local
optima limits the TRHPO performance in a range of environments, especially where
the reward isn't sparse.\par

After building the hierarchical policy learning method, we aim next to incorporate
options that can be learned independently of the top-level policies. We aim to
achieve such a goal by generalizing the \Gls{PVF} concept to solve the MDP on
specific basis functions.

\chapter{Spectral Framework for options discovery}

\begin{abstract}
	After working on temporal action abstractions, we tackle the problem of state
	space abstraction to learn options on basis functions induced from the graph of
	the environment. We use spectral basis functions to learn representations that
	reflect the geometric dynamics of the environment.
\end{abstract}

\section*{Introduction}

Another way of learning the value functions or policies is to approximate them on a
certain basis functions. This approach has several applications in Machine
Learning\cite{MallatWavelets} (Fourier Wavelets, Laplacian) and Quantum Physics
(Hamiltonian)\cite{JoffreQuant} and yields good results when the right basis is
used. The most recent one perhaps is the $100\%$ accuracy on MNIST using complex
neural networks built on the frequency domain features\cite{AizerbergMLMVN}\par
\glsreset{PVF}
In the context of \gls{RL}, the concept of \Gls{PVF} introduced by
Mahadevan\cite{MahadevanPVF} uses the eigenvectors of the graph Laplacian as basis
to learn the optimal policy for a discrete state space \gls{MDP}. The idea is to
exploit the geometry of the environment since spatially close states can be
"dynamically" far or even separate. In the 4-room environment in the examples,
$state_1$ is visually closer to $state_2$ than $state_3$, but dynamically further
than it.

\triplefig{
	\subfig{3_1.png}{0.7}{$state_1$}{fig:s1}}{
	\subfig{3_2.png}{0.7}{$state_2$}{fig:s2}}{
	\subfig{3_3.png}{0.7}{$state_3$}{fig:s3}}{
	Euclidian distance and geodesic distance}{
	fig:DynamicState
}

\glsreset{PVF}
In this chapter, we will introduce the \Gls{PVF} paradigm, present the Laplacian
operator and its use in Spectral Clustering before integrating it into our study of
\gls{RL}. Since we are interested in applications for high-dimensional state space,
we will present and evaluate our model for scaling the introduced techniques.

\section{Proto-Value Functions and Spectral Clustering}
The \Gls{PVF}~\cite{MahadevanPVF} introduced a spectral framework for approximate
dynamic programming. It mainly addressed a finite state space MDP setting in
model-free \gls{RL}. The overall framework can be summarized as follows: the agent
constructs the adjacency matrix of the state space graph where the similarity
$W_{i,j} = 1$ if the agent could move from state $s_i$ to state $s_j$. The
diffusion model is defined using the normalized symmetric graph Laplacian
$L_{sym}$. Then the basis functions are simply the smoothest eigenvectors of the
Laplacian (associated with the lowest eigenvalues). To learn the optimal policy,
PVF uses "Representation Policy Iteration," a policy iteration that takes the
projection on the smoothest eigenvectors as states.\par

To illustrate the concept, we use a 4-room environment with a size of $16\times16$
(177 accessible states). We consider the fixed starting point with an absorbing
goal state. We have 176 states, for which we plot in (~\ref{fig:OptimalApprox}) the
optimal value function $V*$ and its approximation on the first 5 and the full
Laplacian basis, learned using \gls{LSPI}.
% \singlefig{0.5}{3_VFunc.png}{Optimal Value Function}{fig:3VFunc}
Contrary to what we might conclude visually from (\ref{fig:vfeig5}), using the
smoothest 5 eigenvectors doesn't solve the problem. In fact, the approximated
policy has absorbing states different from the target state. Solving this issue
requires using more eigenvectors which in turn creates new modes in the
approximated function, hence the need for a sufficiently large basis to nullify the
discontinuity. Several works address this problem by using a smoother basis as in
\citeauthor{Sugiyama2008} where the geodesic Gaussian kernels are used to build
similarities based on the shortest path distance.

In an entirely different perspective, with the concept of "Eigenoptions",
\citeauthor{Machadoeig} learns options that maximize intrinsic rewards induced by
the eigenvectors. This allows creating options that reflect the geometric
properties translated by each eigenvector separately. Afterwards, a policy over the
Eigenoptions is used to associate each eigenoption with a certain domain of the
state space, similar to hierarchical RL.

\triplefig{
	\subfig{3_VFunc.png}{0.9}{Optimal $V^*$}{fig:VFunc}}
{\subfig{3_vfeig5.png}{0.9}{ Using 5 eigenvectors}{fig:vfeig5}}
{ \subfig{3_vfeig177.png}{0.9}{using 177 eigenvectors}{fig:vfeig177}}{
	Optimal $V$ and approximations}{fig:OptimalApprox}

In the subsequent section, we delve into fundamental properties of the graph
Laplacian that will be utilized subsequently to introduce eigenoptions.

\subsection{Laplacian Operator and Spectral Clustering}

Spectral clustering\cite{SpectralClustering} addresses the problem of partitioning
an undirected weighted graph $G=(V,E)$ under specific constraints, where $V$
represents the set of vertices and $E$ the set of edges.\par

Let $V =\{{v_1,...,v_n}\}$, and $W$ the $n \times n$ adjacency matrix where
$W_{i,j}$ is the non-negative weight of the edge linking $v_i$ and $v_j$, and
$W_{i,j}=0$ means that the vertices $v_i$ and $v_j$ are not linked. The undirected
nature of the $G$ means that $W$ is symmetric. We define the degree of the vertex
$v_i$ as $d_i = \sum_j W_{i,j}$ and the degree matrix $D=\diag{(d_1,....,d_n)}$.\par

Let $A \subset V$, we define the indicator vector $1_A=(f_1,.....f_n)$ as: $f_i=
1_{v_i \in A}$. Similarly, we can measure the similarity between two subsets $A$
and $B$ by the sum of the "similarity" between their vertices: $$ W(A,B) :=
\sum_{i\in A,j\in B} w_{i,j} = 1_A^TW1_B \text{ ,} $$ and to measure the size of
the set $A$, we can either use:
\begin{itemize}
	\item The cardinality measure : $$\text{vol}_\text{c}(A) \vcentcolon= |A| =
	      1_A^T 1_A$$.
	\item Total of its elements degrees: $$\text{vol}_\text{d}(A) \vcentcolon=
	      \sum_{i\in A} d_i = 1_A^T D 1_A $$.
\end{itemize}

\subsubsection{Graph cut and Laplacians}

To partition the graph $G$ into two separate sets $A$ and $\overline{A}$, Spectral
Clustering searches for the least "similar" sets $A$ and $B$. We can use two types
of similarity measures:
\begin{align}
	\text{Ratiocut}(A,\overline{A})\vcentcolon=\frac{W(A,\overline{A})}{|A|} \\
	\text{Ncut}(A,\overline{A})\vcentcolon=\frac{W(A,\overline{A})}{\text{vol}_\text{d}(A)}
\end{align}

We can prove the following properties:
%(see \todo{appendix} for proof)
\begin{align}\label{eq:rayleighquo1}
	\text{Ratiocut}(A,\overline{A}) & =\frac{ {\indicator_A}^T (D-W) \indicator_A }{{\indicator_A}^T\indicator_A}     \\\label{eq:rayleighquo2}
	\text{Ncut}(A,\overline{A})     & =\frac{{\indicator_A}^T (D-W) {\indicator_A} }{{\indicator_A} ^T D\indicator_A}
\end{align}

Since we are interested in finding the set $A$, distinct from $V$, the partitioning
problem is then formulated as finding the vector of $\Real^n$ with $\{0,1\}$
elements that minimizes the Rayleigh quotient in Equation \autoref{eq:rayleighquo1}
or \autoref{eq:rayleighquo2}.

\subsubsection{Graph Laplacians}

\paragraph{Unnormalized/Combinatorial Laplacian:} The combinatorial Laplacian, appearing in the Ratiocut, is defined as
$L\vcentcolon=D-W$, which has the following properties:
\begin{itemize}
	\item for $f$ in $\Real^n$ $$f^T L f = \sum w_{i,j}(f_i-f_j)^2$$
	\item L is symmetric semi definite
	\item The smallest eigenvector is $1$ for the eigen value $\lambda_0=0$
	\item In particular, self-edges in a graph do not change the corresponding
	      graph Laplacian
\end{itemize}
\paragraph{Normalized Laplacians:}
Spectral Clustering literature uses two types of normalized Laplacians:
\begin{itemize}
	\item Random Walk $L_{rw} = I - D^{-1}W$
	\item Symmetric, appearing in the Ncut, $L_{sym} = I - D^{-\by{2}}WD^{-\by{2}}
	      $
\end{itemize}

The matrix $T=D^{-1}W$ can be viewed as a transition matrix. Although $L_{rw}$ is
not symmetric, all its eigenvalues are real, non-negative, and upper-bounded by
$1$. The normalized Laplacians exhibit the following properties:
\begin{itemize}
	\item For $f\in \Real^n$:
	      \begin{align}
		      f^T L_{sym}f = \by{2}\sum_{i,j}(\frac{f_i}{\sqrt{d_i}}-\frac{f_j}{\sqrt{d_j}})^2
	      \end{align}
	\item $u$ is an eigenvector of $L_rw$ with eigen value $\lambda$ iif $W=D^{\by{2}}u$ is an eigenvector of $L_sym$ for the same eigen value.
\end{itemize}

\subsubsection{Cuts optimization}

Solving \autoref{eq:rayleighquo1} and \autoref{eq:rayleighquo2} on the set
$\{0,1\}^n$ is an NP-hard problem. To address this challenge, we relax the
constraint such that the unknown function $f$ is dense with a specific norm.
Minimizing the Ratiocut involves identifying the minimizer $f$ of the Rayleigh
quotient for the matrix $L$, subject to the conditions $f^T f = \sqrt{n}I$ and $f^T
\indicator_V = 0$.

For the Ncut, the objective is to determine $g=D^{\by{2}} \indicator_A$, which is
the minimizer of the Rayleigh quotient for $L_{sym}$. This is achieved under the
constraints $g^T g = \sqrt{n}I$ and $g^TD^{\by{2}}\indicator_V = 0$. The minimizer
of the Rayleigh quotient, subject to these specified constraints, corresponds to
the smoothest eigenvector of the matrices (excluding the first eigenvector).

Consequently, vertices within the same subset are anticipated to exhibit similar
projections onto the solutions of the relaxed problem. As we ascend the Laplacian
spectrum, the projections onto the eigenvectors are inclined to capture more
distinct (higher frequency) variations. Therefore, possessing knowledge of the
complete spectrum is expected to yield representations that effectively
differentiate all vertices.

\section{Eigenoption Discovery}
%\section{From Eigenvectors to Eigenfunctions}

Eigenoptions\cite{Machadoeig} are based on a simple property of the eigenvectors.
Plotting the first 3 eigenvectors below, the agent starts at position $(3,3)$ and
has to reach $(33,33)$. Let $\phi_i$ be the $i-th$ eigenvector, and we define the
intrinsic reward $r_i$ for taking action $a$ to transition $s\rightarrow '$ $$
r_i(s,a) = \phi_i(s')-\phi_i(s) $$ The $i$-th Eigenoption is the policy that
maximizes the expected accumulated intrinsic reward $r_i$. In
\autoref{fig:first3eig}, the first Eigenoption would lead the agent to the corner
of the first room an and exit the third room. For the third Eigenoption, the agent
would have to exit the second and fourth room to reach the absorbing state in the
corners of the first and third room. So if the task is to exit the first room, we
should follow the second eigenoption.

\triplefig{ \subfig{3eig1.png}{0.9}{$1^{st}$
		eigenvector}{fig:3eig1}}{ \subfig{3eig2.png}{0.9}{$2^{nd}$
		eigenvector}{fig:3eig2}}{ \subfig{3eig3.png}{0.9}{$3^{rd}$
		eigenvector}{fig:3eig3}}{ Eigenvectors of $L_{sym}$}{fig:first3eig}

Extending the concept of Eigenoptions to large and continuous state space \gls{MDP}
requires to be able to learn eigen vectors of large, and virtually infinite, graph
Laplacian. In this section we go through two recent methods and in the last section
we present our own.

\subsection{Optimization on Matrix Manifold}
For problems involving orthogonality or low-rank constraints, these characteristics
are expressed naturally using the Grassmann manifold. There has been growing
interest in studying the Grassmann manifold and exploiting its structure,
especially in computer vision. However, the manifold structure was usually used to
model either the input or the parameters of parametric functions. It is only
recently that we've started to see works where neural networks are used to output
representations on the manifold. More specifically, given a batch sample
$(X_i)_{i\leq n}$ in $\Real^d$, we want to learn a mapping $F: \Real^d \rightarrow
\Real^k$ that solves:
\begin{align}
	\label{eq:ObjectSpin}
	\begin{aligned}
		 & \underset{F}{\text{minimize}} &  & \mathcal{L}((X_i)_i,F) = \sum_{i,j\leq n} K(X_i,X_j)F(X_i)^T F(X_j) \\
		 & \text{subject to}             &  & \by{n}\sum_i F(X_i)^T F(X_i) = I_k,
	\end{aligned}
\end{align}

with K a similarity kernel on $\Real^d\times\Real^d$.\par

In other words, the $n\times k$ matrix $Y$ with rows $Y_i = \by{\sqrt{n}}F(X_i)$
should be on the Stiefel manifold defined as: $$ V_{n,k} \vcentcolon= \{A\in
\Real^n\times \Real^k | A^T A = I_k\} $$

However, the problem doesn't have a unique solution. Given an optimal mapping F and
taking a matrix $ Q\in \mathcal{O}_k $ to define $ {F'}(X) = F(X)^T Q $, then
${F'}$ is also optimal.\par

To correctly define the problem, the optimization set should identify elements of
$V_{n,k}$ whose columns span the same subspace: the Grassmann manifold $G_{n,k}$.

When $n$ is large, evaluating whether $F$'s outputs are on the Grassmann manifold
can be only approximated. Informally, if the input X has distribution P, then $F$
is simply transforming X into a latent representation such that:
\begin{align}
	\label{eq:ObjectSpinProba}
	\begin{aligned}
		 & \underset{F}{\text{minimize}}
		 &                               & \E^P[K(X,X')F(X)^T F(X')] \\
		 & \text{subject to}
		 &                               & \E^P[F(X)^T F(X)] = I_k,
	\end{aligned}
\end{align}

To evaluate if the output of batch of size $n$ is on the manifold, we use the
distance: $$ d_g(A) = \underset{B \in G_{n,p}}{inf}{ ||A-B||_2^2} $$

\begin{lemma}
	For a matrix $A \in\Real^n\times \Real^k$:
	\begin{align}
		d_g(A) = \sum_{i=1}^{k} (\sigma_i-1)^2
	\end{align}
	where $S = \diag(\sigma_1,...\sigma_k)$ such that $A=USV^T$ is the SVD decomposition of A.
\end{lemma}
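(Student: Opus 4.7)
The strategy is to reduce the optimization to an orthogonal Procrustes problem and solve it using the SVD of $A$. I treat each point of $G_{n,k}$ as an $n \times k$ matrix $B$ with orthonormal columns (i.e.\ $B^T B = I_k$), and take the ambient norm to be Frobenius, which matches the formula in the lemma.

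First I would expand the squared Frobenius distance. Using $B^T B = I_k$,
\[
\|A-B\|_F^2 \;=\; \operatorname{tr}(A^T A) \;-\; 2\operatorname{tr}(A^T B) \;+\; \operatorname{tr}(B^T B) \;=\; \sum_{i=1}^{k}\sigma_i^2 \;+\; k \;-\; 2\operatorname{tr}(A^T B),
\]
so minimizing $\|A-B\|_F^2$ over Stiefel matrices $B$ is equivalent to maximizing the linear functional $\operatorname{tr}(A^T B)$, which is the classical orthogonal Procrustes objective.

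Next I would substitute $A = U S V^T$ and set $M = U^T B \in \Real^{k \times k}$. Since $UU^T$ is an orthogonal projection of rank $k$, one has $M^T M = B^T(UU^T)B \preceq B^T B = I_k$, so every singular value of $M$, and therefore of $MV$, is at most $1$. The von Neumann trace inequality applied to the diagonal $S$ and to $MV$ gives
\[
\operatorname{tr}(A^T B) \;=\; \operatorname{tr}(S\,MV) \;\le\; \sum_{i=1}^{k}\sigma_i(S)\,\sigma_i(MV) \;\le\; \sum_{i=1}^{k}\sigma_i.
\]
Equality is achieved by the explicit candidate $B^{\star} = UV^T$, which satisfies $(B^\star)^T B^\star = V U^T U V^T = I_k$ and realizes $\operatorname{tr}(A^T B^\star) = \operatorname{tr}(V S V^T) = \sum_i \sigma_i$. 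Plugging back in,
\[
d_g(A) \;=\; \sum_{i=1}^{k}\sigma_i^2 \;+\; k \;-\; 2\sum_{i=1}^{k}\sigma_i \;=\; \sum_{i=1}^{k}(\sigma_i - 1)^2,
\]
which is the claim.

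The step I expect to require the most care is not the algebra but the identification of $G_{n,k}$ with $V_{n,k}$ for this distance. A Grassmann point is an equivalence class $[B]$ under $B \sim BQ$ for $Q \in O(k)$, and in general the Frobenius distance $\|A - BQ\|_F$ depends on the chosen representative. What rescues the argument is that the optimizer $B^{\star} = UV^T$ is itself a Stiefel representative of a genuine Grassmann point (the column span of $U$), so the infimum over $V_{n,k}$ coincides with the infimum over the quotient; if the paper intends the quotient distance $\inf_Q \|A - BQ\|_F$, the inner Procrustes step has exactly the same form and leaves the final bound unchanged.
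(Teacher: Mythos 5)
The paper states this lemma without any proof (the appendix only proves the hierarchical KL bound), so there is no paper argument to compare against; your proof fills a genuine gap. Your argument is correct and is the standard one: with $d_g$ read as the squared Frobenius distance to the Stiefel manifold $V_{n,k}$, the expansion $\lVert A-B\rVert_F^2=\sum_i\sigma_i^2+k-2\,\mathrm{tr}(A^TB)$ reduces the problem to orthogonal Procrustes, the von Neumann trace inequality applied to $\mathrm{tr}(S\,U^TBV)$ (using that all singular values of $U^TB$ are at most $1$ because $B^T UU^T B$ is dominated by $B^TB=I_k$) gives the upper bound $\sum_i\sigma_i$, and the candidate $B^\star=UV^T$ attains it. This also yields, as a byproduct, the paper's unproved follow-up claim that the projection of $A$ onto the manifold is $UV^T$. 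Your closing remark on the Stiefel-versus-Grassmann ambiguity is well taken: the paper writes the infimum over $G_{n,p}$ but identifies its elements with matrices, and your observation that the minimizer $UV^T$ is a bona fide representative of the column span of $U$ (so the two readings of the infimum agree) is exactly the point that needs to be made. One small caveat worth stating explicitly: the identity $\mathrm{tr}(A^TB^\star)=\mathrm{tr}(S)$ and the bound $\mathrm{tr}(A^TA)=\sum_i\sigma_i^2$ both use the thin SVD with $U^TU=I_k$, and the formula as stated implicitly assumes the $\sigma_i$ are the singular values of a full-column-rank (or at least thin-SVD) factorization; if some $\sigma_i=0$ the argument still goes through but $B^\star$ is no longer unique.
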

Consequently, the projection of $A$ on Grassmann manifold is $B=UV^T$.

\subsection{Neural Network for Eigenvectors learning}

\subsubsection{Spectral Net}
In \cite{spectralnet}, \citeauthor{spectralnet} used neural networks to learn an
approximation of Laplacian eigenvectors for large-scale applications. Their
'SpectralNet' $f_\theta : \Real^n \rightarrow \Real_k$ tries to minimize the
"spectral loss" defined as: $$ L_{SpectralNet}(\theta,(x_i)_i) =
\by{m^2}\sum_{i,j=1}^{m}W_{i,j}\normtwo{f_\theta(x_i)-f_\theta(x_j)}^2, $$ under
the constraint: $$ \by{m}\sum_{i,j=1}f_\theta(x_i)^Tf_\theta(x_j) = I_k. $$ The
minimization is done with batches, and $m$ is the size of the batch. To impose the
orthogonality constraint, a "Cholesky" decomposition layer is added on top of the
neural network.\par
\paragraph{Cholesky Layer:}
Let $Y$ be the neural network's output for a sample $X$ of size $m$. If we take the
Cholesky decomposition $L$ of $S:=\by{m}Y^T Y = L L^T$, where $L$ is a lower
triangular matrix, then $Y^* = \sqrt{m} Y L^{-1}$ is on the Grassmann manifold.\par

The paper's main contribution, however, is establishing a theoretical result for
the minimum number of units required to be able to learn the eigenvectors.

\begin{lemma}
	\begin{align}
		VC dim(F^{spectral clustering}_n)\geq \frac{n}{10}.
	\end{align}
\end{lemma}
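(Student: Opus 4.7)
The plan is to exhibit a shattered set of size at least $n/10$ for the function class $F^{\text{spectral clustering}}_n$. Recall that a lower bound of $k$ on the VC dimension of a binary-valued hypothesis class is obtained by finding $k$ input points such that every labeling in $\{-1,+1\}^k$ is realized by some hypothesis in the class. Here, the hypothesis associated to a weighted graph on $n$ vertices is the sign pattern of its second smallest Laplacian eigenvector (the Fiedler vector), so the task reduces to designing $n$-vertex graphs whose Fiedler vectors take prescribed signs on a linear-sized set of target vertices.

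First, I would fix $k = \lceil n/10 \rceil$ target vertices $S = \{v_1, \dots, v_k\}$ and partition the remaining $n - k$ vertices into two tightly-connected \emph{cluster cores} $C^+$ and $C^-$ of size roughly $(n-k)/2$ each, taken to be complete subgraphs with uniform intra-core weight $W$. Second, for each target labeling $y \in \{-1,+1\}^S$, I would construct a weighted graph $G_y$ by linking $C^+$ to $C^-$ by a single weak edge of weight $\varepsilon$ and attaching every $v_i \in S$ to $C^{y_i}$ by an edge of weight $w$, chosen so that $\varepsilon \ll w \ll W$. The Laplacian of $G_y$ is then a small perturbation of a block-diagonal operator whose near-kernel is spanned by the indicators of $C^+ \cup \{v_i : y_i = +1\}$ and $C^- \cup \{v_i : y_i = -1\}$, so in the block-diagonal limit the Fiedler vector has sign exactly $y_i$ on each $v_i$.

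Third, a Davis--Kahan type perturbation bound, combined with a lower bound on the spectral gap $\lambda_3 - \lambda_2$ of the unperturbed operator (which follows from the fact that each complete core has internal spectral gap of order $W$), shows that the true Fiedler vector of $G_y$ stays within $O(\sqrt{\varepsilon/W} + \sqrt{w/W})$ of the idealized $\pm 1$ pattern and therefore keeps the correct sign on every $v_i \in S$. As $y$ ranges over $\{-1,+1\}^S$ this produces $2^{|S|}$ distinct sign patterns on $S$ realized by functions in $F^{\text{spectral clustering}}_n$, so $S$ is shattered and $\mathrm{VC\text{-}dim}(F^{\text{spectral clustering}}_n) \geq |S| \geq n/10$.

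The main obstacle is the third step: one must certify that it really is the \emph{second} eigenvector (not some higher-index eigenvector whose sign happens to match $y$ by accident) that inherits the target pattern, and that no $v_i \in S$ accidentally flips sign under perturbation. This forces a careful joint calibration of the three weight scales $\varepsilon, w, W$ so that the eigenspace perturbation is uniformly smaller than $1$ on $S$ and the gap $\lambda_3 - \lambda_2$ dominates all error terms. The construction above is essentially the minimum needed to realize all $2^{n/10}$ labelings; sharpening the constant $1/10$ would require either packing more target vertices between denser cluster cores or a more refined eigenvector perturbation bound.
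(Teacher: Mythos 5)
The thesis itself does not prove this lemma---it is imported from the SpectralNet paper---so there is no in-paper argument to compare against; your proposal has to stand on its own. Its overall shape (exhibit a fixed set of $\lceil n/10\rceil$ target vertices and, for each sign pattern, a configuration whose Fiedler vector realizes it) is the only possible shape for a VC lower bound and is in the spirit of the cited construction. But the quantitative heart of your step three is wrong. In your unperturbed block-diagonal operator, each block is a complete core of weight $W$ \emph{together with} its pendant target vertices attached by weight-$w$ edges, and the algebraic connectivity of such a block is throttled by the pendants: the third eigenvector oscillates a pendant vertex against its core at eigenvalue $\Theta(w)$, so $\lambda_3-\lambda_2=\Theta(w)$, not $\Theta(W)$. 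The asserted error $O(\sqrt{\varepsilon/W}+\sqrt{w/W})$ is therefore not the controlling quantity; what you actually need is $\varepsilon/w$ small compared to the entry scale $1/\sqrt{n}$ of the idealized indicator-difference vector, and since Davis--Kahan only yields an $\ell_2$ bound you must additionally rule out the perturbation concentrating on a single pendant coordinate. This is repairable, but as calibrated the argument does not certify the signs on $S$.

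The deeper gap is in what the hypothesis class is. The class $F^{\text{spectral clustering}}_n$ of the cited result is parameterized by configurations of $n$ points under a fixed affinity rule (a Gaussian kernel or nearest-neighbour graph); a shattered set is a \emph{fixed} set of domain points, and only the auxiliary points may be rearranged between labelings. Your construction instead re-wires the edges incident to the shattered vertices themselves---attaching $v_i$ to $C^{y_i}$ as a function of the target label---which treats those weights as free hypothesis parameters. Under a kernel affinity this amounts to moving $v_i$, which shattering forbids, and the weight pattern you require (cores internally complete at weight $W$, exactly one $\varepsilon$-edge between them, all other cross-core weights zero) is not realizable by any metric-induced affinity in the first place. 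This mismatch is exactly where the constant $1/10$ comes from in the genuine argument: one spends roughly ten \emph{auxiliary} points per shattered point and rearranges the gadget around each fixed $v_i$ to flip its cluster assignment. The fact that your two-big-cores construction would, if legitimate, prove $\mathrm{VCdim}\geq n-O(1)$ is the tell that the formalization has drifted away from the class the lemma is actually about.
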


This implies that to learn representation separating $n$ points, the number of
weights in the neural net should be in the same order as the number of points $n$.

They have also demonstrated the convergence to real eigenvectors for some cases. On
the other hand, the methods used for learning are based on simple back-propagation.
At each iteration, the Cholesky layer weights are frozen to minimize the Spectral
loss. Since the minimizer of such loss is the zero matrix, the output approaches
zero while the Cholesky layer weight, the inverse of $L$ for $Y$ approaching $0$,
can explode if the learning rate isn't carefully controlled. Furthermore, the
SpectralNet methods don't address the overlapping of eigenvectors; hence, the
representation learned is, in fact, a noisy combination of the smoothest
eigenvector.

\subsubsection{Spectral Inference Networks}

Inspired by applications for the Hamiltonian in quantum mechanics,
\citeauthor{Spin} addresses the general problem of maximizing the Rayleigh quotient
for an "infinite" matrix whose elements are defined by a certain similarity kernel.
\begin{align}
	%\label{eq:SpInobj}
	\begin{aligned}
		 & \underset{x}{\text{maximize}} &  & \text{Tr}((Y^T Y)^{-1} Y^T A Y) \\
		 & \text{subject to}             &  & Y^T Y = I_k,
	\end{aligned}
\end{align}

where $A \in \Real^{n,n}$ and $Y \in \Real^{n,k}$.

Given an input data $(x_i)_{i\leq n}$ and $(y_i)_{i\leq n}$, their output using
\gls{SpIn}, and $A$ as their similarity matrix, we define $\Pi$ and $\Sigma$ in
$\Real^{k,k}$ as:

\begin{align}
	\Pi    & \vcentcolon= \by{n}\sum_{i,j} a_{i,j} x_i{x_j}^T \\
	\Sigma & \vcentcolon= \by{n}\sum_{i,i} x_i{x_i}^T
\end{align}

Using these notations, $\Sigma$ and $\Pi$ depend on the network parameter $\theta$,
and the gradient of the objective can be written as:
\[
	\text{Tr}(\Sigma \nabla_\theta\Pi) - \text{Tr}(\Sigma^{-1}\Pi\Sigma^{-1}\nabla_\theta\Sigma)
\]

To solve the objective, \citeauthor{Spin} proceeds by accumulating an empirical
estimate of $\Sigma$ and $\nabla_\theta\Sigma$, which requires calculating $k^2$
gradients and storing $k^2$ times the number of network parameters. Based on the
theoretical result of SpectralNet, approximating eigenvectors requires a large
neural network. Therefore, calculating and storing $\nabla_\theta\Sigma$ is
expensive.

On the other hand, \gls{SpIn} addresses the question of overlapping eigenvectors
and modifies the gradient in a way to make the update sequentially independent: the
first eigenvectors' gradient is independent of the gradient coming from
eigenvectors of higher eigenvalues. In our implementation of the \gls{SpIn},
modifying the gradient reduces the update step and hence renders the objective
increments extremely slow. Furthermore, we believe that the sparsity constraints
used in their neural network and the separation of weights of eigenvectors have an
important impact on the performance demonstrated in their paper, which was
considered a technical detail in their work.

\subsection{Proposed Spectral Network}

We build on SpectralNet and \gls{SpIn} to derive a new method that produces
separated eigenvectors, with fewer gradient backpropagations and memory
requirements. Using the same notation in the previous subsection, we aim to
minimize the "Sequential" Rayleigh quotient. Let $$ R_j(A,Y) = \text{Tr}((Y_j^T
Y_j)^{-1} Y_j^T A_{:j,:j} Y_j) $$ Where $Y_j$ is the matrix of the first $j$
vectors of $Y$ and $A_{:j,:j}$ is the first $j\times j$ block of matrix $A$.

Then our objective is:

\begin{align}
	\label{eq:SpInobj}
	\begin{aligned}
		 & \underset{Y}{\text{minimizer}} &  & \sum_j R_j(A,Y) \\
		 & \text{subject to}              &  & Y^T Y = I_k,
	\end{aligned}
\end{align}

If the eigenvalues of $A$ are distinct, then the columns of the global optimum are
guaranteed to be orthogonal.

From this point, the key idea is to take update directions for which the gradient
$\nabla_\theta\Sigma$ is small and update the network parameters to minimize the
Grassmann distance. This is similar in a way to the sub-gradient method. However,
the set on which we incrementally project (Grassmann Manifold) is not convex in our
case.

Furthermore, we use large samples to estimate the sequential Rayleigh quotient.
Alternatively, we can use the learned covariance matrix as in the SpIn.

At iteration $t$, where the parameter of the network is $\theta_t$:
\begin{align}
	\begin{aligned}
		 & \underset{\theta}{\text{minimize}}
		 &                                    & \sum_j R_j(A,Y_\theta)                                             \\
		 & \text{subject to}
		 &                                    & ||Y_\theta^T Y_\theta- Y_{\theta_t}^T Y_{\theta_t}||_2^2 = \delta.
	\end{aligned}
\end{align}

We note that $||Y_\theta^T Y_\theta- Y_{\theta_t}^T Y_{\theta_t}||_2^2$ is
approximately quadratic in $\theta$ for $\theta$ sufficiently close to $\theta_t$.
For a small $\delta$, the update step ensures staying approximately on the same
Stiefel manifold as the covariance doesn't change. After each step minimizing the
objective, we take an orthogonal update towards the Grassmann manifold under the
constraint of keeping the gained improvement. The learned covariance matrix is used
to update the weight of the output Cholesky layer. The update is done gradually
using a learning step instead of the brutal update as in SpectralNet. Ultimately,
the Cholesky layer allows the network to output the centered eigenvectors based on
the learned covariance matrix $\Sigma$.

\begin{algorithm}
	\caption{Spectral Network}
	\begin{algorithmic}
		\Require{A parameterized network $F_\theta$, Learning rate $\alpha$ for $\Sigma$, Learning rate $\beta< 1$ for gradient updates }
		\For{$i \gets 1 \textrm{ to } T $}
		\State{Sample $(x_i)_i$ and get the output $(y_i)_i$ }
		\State{ Calculate $\hat{\Sigma}$ and update the Cholesky layer weights with rate $\alpha$}
		\State{ Get gradient of the Sequential Rayleigh quotient}
		\State{ Use quadratic approximation of the constraint, calculate fisher information matrix}
		\State{ Use Conjugate Gradient to find a search direction $g$}
		\State{ Get the Grassmann distance gradient $p$ and project it on the normal on $g$ }
		\State{ Use Armijo line search using $g$}
		\State{ Use a simple line search using $p$ while keeping approximately   the gained improvement from previous step}
		\State{If Rayleigh Quotient doesn't improve after 10 iterations, reduce update rate}
		\EndFor
	\end{algorithmic}
\end{algorithm}
\newpage

\section{Evaluating the Spectral Network}

We use our network to learn eigenvectors of the combinatorial Laplacian for the 4
rooms environment of size 36. Since the eigenvector associated with the first
eigenvalue is the vector $1$, we can fix the first component of the output.

\subsection{Eigenvectors of the 4-rooms environment}
Two states are similar if the agent transitions from one to another. We consider
the grid with size $36$ with random locations for the agent and the target point,
which sums up to $1202312$ possible states. We use the combinatorial Laplacian in
our experiment, since all positions have nearly the same degrees, except positions
near the walls. Furthermore, using normalized Laplacians requires estimating the
degree of vertices, which is prohibitively difficult in batch learning. To avoid
sampling the same state multiple times, we use a graph builder that hashes the
states and identifies states using the hash dictionary.

We visualize the finite set of states for the deterministic 4-room environment. The
eigenvectors, however, have been learned for the stochastic 4-room environment,
which has a space of over $10^6$ states. We display in the figures below the 3rd
and the 5th learned eigenvectors for the deterministic environment and a second
environment which is a 180 degrees rotation of the initial one.

The obtained functions display similar invariance characteristics with spectral
eigenvectors. More importantly, learning eigenoptions using these "eigenfunctions"
will ensure that the learned options will lead to the same relative states. In our
work, we attempted to use the TRPO model used in the first 2 chapters to learn
eigenoptions, but without success. We believe it is due to the necessary large size
of the network.

\doublefig{\subfig{3_3_1.png}{0.7}{Initial Environment}{fig:331}}{\subfig{3_3_2.png}{0.7}{Rotated Environment}{fig:332}}{3rd Eigenvector}{fig:room3}
\doublefig{\subfig{3_5_1.png}{0.7}{Initial Environment}{fig:351}}{\subfig{3_5_2.png}{0.7}{Rotated Environment}{fig:352}}{5th Eigenvector}{fig:room5}

\subsection{Clustering}

We also compare our method's performance with the baseline in SpectralNet, which
uses the spectral network without learning Siamese similarities or using the
variational embedding of the dataset.

We define similarities between data points using nearest neighbors for $n=5$. To
reduce the similarity variance between samples, we store the learned distances and
keep the closest $1024$ neighbors from each point. This allows us to stabilize the
similarities between data points once enough points are browsed. Using this method,
it is unclear what number of eigenvectors to use, since there is no guarantee of
having a fully connected graph or avoiding creating unnecessary cliques.

\paragraph{MNIST}
For MNIST, we report have the following performance:
\begin{center}
	\begin{tabular}{||c c c||}
		\hline
		Number of eigenvectors & Accuracy & SpectralNet Score \\ [0.5ex]
		\hline\hline
		10                     & 69.27\%  & 62.3\%            \\
		\hline
		11                     & 71.08\%  & -                 \\
		\hline
		15                     & 74.17\%  & -                 \\
		\hline
		20                     & 67.85\%  & -                 \\
		\hline
	\end{tabular}
\end{center}
\paragraph{Reuters}

For the Reuters dataset, we learn the first 4 eigenvectors. We train on $6.10^5$
data points and evaluate on a testing set of size $10^4$. We report a test accuracy
of $76.18\%$ after 72 iterations, which drops to around $69\%$ afterwards. The
training accuracy stabilizes around $67.24\%$. Since the training set is large, the
reported training accuracy is evaluated between distant iterations. This is to be
compared with the $64.5\%$ accuracy achieved by SpectralNet. Hence, we believe that
using embedding or Siamese networks, as was done by \citeauthor{spectralnet}, would
outperform SpectralNet.

\vspace{-1em}
\section*{Conclusion}

\vspace{-1em}
After building the HRL model, we attempted to learn options directly without going
through the full HRL method. We used the \gls{PVF} concept and built on recent work
to approximate eigenvectors using neural networks for large datasets. We
demonstrated that the learned functions present similar invariance characteristics
as the Spectral eigenvectors and showed that our method outperforms the recent
SpectralNet in the clustering task when used without Siamese distances nor the
variational embedding.\par

On the other hand, we encountered difficulties when learning eigenoptions using the
TRPO method. Therefore, we envision continuing to improve and build on the current
achievements and to extend it to the standard use of eigenfunctions: Successor
Features estimation.

\newpage
\chapter*{Conclusion}

Starting with the goal of building improved Reinforcement Learning methods, we
showed that clustering the state space is an efficient strategy to master complex
and large state spaces. First, using action abstraction by building the
Hierarchical TRPO method. In the second part, we build on the Eigenoption method,
which can be seen as a state abstraction, to learn the embedding of the state space
incorporating the knowledge about environment dynamics.\par

We perform extensive tests for our method for learning the eigenvectors for
clustering and for Eigenoptions estimation. Our next step was supposed to combine
the Hierarchical model with Eigenoptions. As we have demonstrated, learning
eigenfunctions requires large neural networks. Similarly, when learning an
eigenoption, the option policy should be as rich as the spectral network to have a
good approximation of the policy. This has hindered our testing of the HTRPO and
limited our reported results for the RL task.\par

To tackle this problem, the model can be improved by incorporating more
sophisticated parameterization as the sparse weight block used in \cite{Spin}. In
the immediate term, we will pursue more extensive tests of the developed model with
more theoretical examination of the optimization heuristic.

\addcontentsline{toc}{chapter}{Glossary}
%\printnoidxglossary
\printglossaries
\addcontentsline{toc}{chapter}{References}
\bibliographystyle{unsrtnat}
\bibliography{main}
\appendix

\newpage
\section*{TRPO Implementation Parameters}
\vspace{-1em}
\section*{Chapter 2 proofs}

\begin{lemma}
Let $n\geq 1$ be the number of option policies, and $\pi_g$ the gating policy (distribution over the options set). If $\pi$ and $\tilpi$ are two hierarchical policies such that $\tilpi$ is absolutely continuous w.r.t $\pi$, then at any state $s$, we have
\begin{align}
  \text{D}_{\text{KL}} \lrbrack{ \tilpi(.|s)||\pi(.|s)}
   \leq \text{D}_{\text{KL}}\lrbrack{\tilpi_g(.|s)||\pi_g(.|s)}
    + \sum_{k=1}^{n} \pi_g(o|s)\text{D}_{\text{KL}}\lrbrack{\tilpi(.|s,o))||\pi(.|s,o)}
\end{align}
\end{lemma}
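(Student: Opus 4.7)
The plan is to reduce the hierarchical KL to the log-sum inequality applied pointwise in the action variable. The hierarchical structure $\pi(a\mid s)=\sum_{o\in\mathcal{O}}\pi_g(o\mid s)\pi(a\mid s,o)$ (and likewise for $\tilpi$) expresses the mixture policy as a convex combination indexed by options, and the log-sum inequality is precisely the tool that controls the KL of a mixture in terms of the KL of the mixture weights plus the weighted KL of the components.

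First I would unfold the left-hand side as
\[
\text{D}_{\text{KL}}\bigl(\tilpi(\cdot\mid s)\,\|\,\pi(\cdot\mid s)\bigr)
=\sum_{a}\tilpi(a\mid s)\log\frac{\tilpi(a\mid s)}{\pi(a\mid s)},
\]
and substitute the hierarchical decomposition from Equation~\eqref{eq:hpolicy} into both numerator and denominator. Then, for each fixed $a$, I would apply the log-sum inequality with the non-negative numerator weights $u_o=\tilpi_g(o\mid s)\tilpi(a\mid s,o)$ and denominator weights $v_o=\pi_g(o\mid s)\pi(a\mid s,o)$. This converts $\tilpi(a\mid s)\log\frac{\tilpi(a\mid s)}{\pi(a\mid s)}$ into the upper bound $\sum_o u_o\log(u_o/v_o)$.

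Next I would split the logarithm $\log\frac{u_o}{v_o}=\log\frac{\tilpi_g(o\mid s)}{\pi_g(o\mid s)}+\log\frac{\tilpi(a\mid s,o)}{\pi(a\mid s,o)}$ and swap the finite sums over $a$ and $o$. The first term collapses via $\sum_a\tilpi(a\mid s,o)=1$ to $\sum_o\tilpi_g(o\mid s)\log\frac{\tilpi_g(o\mid s)}{\pi_g(o\mid s)}=\text{D}_{\text{KL}}(\tilpi_g(\cdot\mid s)\,\|\,\pi_g(\cdot\mid s))$, while the second aggregates into $\sum_o\tilpi_g(o\mid s)\,\text{D}_{\text{KL}}(\tilpi(\cdot\mid s,o)\,\|\,\pi(\cdot\mid s,o))$. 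Adding these yields exactly the right-hand side of the claim (up to whether one writes the weighting with $\pi_g$ or $\tilpi_g$, which should be reconciled with the statement given in the main text).

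The main obstacle is really just a bookkeeping subtlety rather than a deep one: the hypothesis $\tilpi\ll\pi$ is used to guarantee that whenever $u_o>0$ one has $v_o>0$, so the ratios $u_o/v_o$ are finite on the support of $\tilpi$ and the log-sum inequality produces a finite upper bound. One should also verify that the inequality survives the passage to continuous actions by invoking the integral (Jensen) form of the log-sum inequality, which is immediate since the option set $\mathcal{O}$ is finite so only the action integral need be handled; the swap of sum and integral is then justified by Tonelli. Equality in the bound occurs exactly when, for almost every $a$, the pairs $(u_o,v_o)_o$ are proportional across $o$, which corresponds to the degenerate case where the option identity is fully determined by the action.
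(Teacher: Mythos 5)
Your proof is correct, and it reaches the result by a genuinely different (though closely parallel) route from the one in the appendix. The paper works at the level of the joint distribution over $(a,o)$: it sets $p(a,o)=\pi_g(o|s)\pi(a|s,o)$ and $q(a,o)=\tilpi_g(o|s)\tilpi(a|s,o)$, writes the chain rule for $\text{D}_{\text{KL}}(q\|p)$ twice (once conditioning on $a$, once conditioning on $o$), subtracts the two identities, and drops the non-negative term $\sum_a q_A(a)\,\text{D}_{\text{KL}}(q(\cdot|a)\|p(\cdot|a))$. You instead apply the log-sum inequality pointwise in $a$ to the mixture weights $u_o=\tilpi_g(o|s)\tilpi(a|s,o)$ and $v_o=\pi_g(o|s)\pi(a|s,o)$, then split the logarithm and sum over $a$. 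These are two proofs of the same underlying fact -- that marginalizing out $o$ cannot increase the KL divergence, after which the joint KL is expanded over the option variable -- so each step of your argument has a counterpart in the paper's. What your version buys is that it is self-contained (a single elementary inequality rather than two invocations of the chain rule plus non-negativity of conditional KL) and it makes the equality condition transparent; what the paper's version buys is that the information-theoretic structure (joint, marginals, conditionals) is explicit, which makes the connection to the conditional Bregman divergence discussed in the main text more visible. You are also right to flag the weighting: both your derivation and the paper's own proof produce the weight $\tilpi_g(o|s)$ in the second term, whereas the lemma as stated writes $\pi_g(o|s)$; this is an inconsistency in the statement, not in your argument.
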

\begin{proof}
\label{thm:proofKL}
We fix a state $s$, and we consider the two distributions p and q:
\begin{align}
\label{eqn:defkl}
\begin{split}
p(a,o) \vcentcolon= \pi(a,o|s)= \pi_g(o|s)\pi(a|s,o),
\\
q(a,o) \vcentcolon= \tilpi(a,o|s) = \tilpi_g(o|s)\tilpi(a|s,o)
\end{split}
\end{align}
and we denote $p_A$ and $q_A$, $p_O$ and $q_O$, respectively, the marginal distributions.

Using the conditional KL-divergence, we have th following two identities
\begin{align}
\text{D}_{\text{KL}} \lrbrack{q||p} = \text{D}_{\text{KL}} \lrbrack{q_A||p_A} + \sum_a q_A(a) \text{D}_{\text{KL}}  \lrbrack{q(.|a)||p(.|a)} \label{KLC1}\\
\text{D}_{\text{KL}} \lrbrack{q||p} = \text{D}_{\text{KL}} \lrbrack{q_O||p_O} + \sum_o q_O(o) \text{D}_{\text{KL}}  \lrbrack{q(.|o)||p(.|o)}\label{KLC2}
\end{align}
Substracting \autoref{KLC2} from \autoref{KLC1}, we obtain:

\begin{align}
\text{D}_{\text{KL}}\lrbrack{q_A||p_A}  =  \text{D}_{\text{KL}} \lrbrack{q_O||p_O} + \sum_o q_O(o) \text{D}_{\text{KL}}  \lrbrack{q(.|o)||p(.|o)}- \sum_a q_A(a) \text{D}_{\text{KL}}  \lrbrack{q(.|a)||p(.|a)}
\end{align}
Since $\text{D}_{\text{KL}}  \lrbrack{q(.|a)||p(.|a)}\geq 0$for all $a \in \cA$, we have:
\begin{align}
\text{D}_{\text{KL}}\lrbrack{q_A||p_A} \leq  \text{D}_{\text{KL}} \lrbrack{q_O||p_O} + \sum_o q_O(o) \text{D}_{\text{KL}}  \lrbrack{q(.|o)||p(.|o)}
\end{align}

All is left is to notice that :
\begin{align}
p&_A = \pi(.|s),\text{ } q_A = \tilpi(.|s) \nonumber \\
p&_o = \pi_g(.|s),\text{ } q_o = \tilpi_g(.|s) \nonumber \\
p&(.|o) = \pi(.|s,o),\text{ } q(.|o) = \tilpi(.|s,o). \nonumber
\end{align}
Hence :
\begin{align}
\text{D}_{\text{KL}}\lrbrack{\tilpi(.|s)||\pi(.|s)} \leq  \text{D}_{\text{KL}} \lrbrack{\tilpi_g(.|s)||\pi_g(.|s)} + \sum_o \tilpi_g(o|s) \text{D}_{\text{KL}}  \lrbrack{\tilpi(.|s,o)||\pi(.|s,o)}
\end{align}
\end{proof}

\end{document}